\theoremstyle{plain}
\newtheorem{theorem}{Theorem}[section]
\newtheorem{proposition}[theorem]{Proposition}
\newtheorem{lemma}[theorem]{Lemma}
\theoremstyle{definition}
\theoremstyle{remark}
\icmltitlerunning{From Moments to Models: Graphon-Mixture Learning for Mixup and Contrastive Learning}
\begin{document}

\twocolumn[
  \icmltitle{From Moments to Models: Graphon-Mixture Learning \\ for Mixup and Contrastive Learning}



  \icmlsetsymbol{equal}{*}

  \begin{icmlauthorlist}
    \icmlauthor{Ali Azizpour}{equal,ece}
    \icmlauthor{Reza Ramezanpour}{equal,ece}
    \icmlauthor{Santiago Segarra}{ece}
  \end{icmlauthorlist}

  \icmlaffiliation{ece}{Department of Electrical and Computer Engineering, Rice University, Houston, TX, USA}

  \icmlcorrespondingauthor{Ali Azizpour}{aa210@rice.edu}
  \icmlcorrespondingauthor{Reza Ramezanpour}{rr68@rice.edu}

  \icmlkeywords{Graphon, Graphon mixture, Moment, Graph Contrastive Learning, Graph Mixup}

  \vskip 0.3in
]



\printAffiliationsAndNotice{\icmlEqualContribution}

\begin{abstract}
Real-world graph datasets often arise from mixtures of populations, where graphs are generated by multiple distinct underlying distributions. 
In this work, we propose a unified framework that explicitly models graph data as a mixture of probabilistic graph generative models represented by graphons. 
To characterize and estimate these graphons, we leverage graph moments (motif densities) to cluster graphs generated from the same underlying model.
We establish a novel theoretical guarantee, deriving a tighter bound showing that graphs sampled from structurally similar graphons exhibit similar motif densities with high probability. 
This result enables principled estimation of graphon mixture components.
We show how incorporating estimated graphon mixture components enhances two widely used downstream paradigms: graph data augmentation via mixup and graph contrastive learning. 
By conditioning these methods on the underlying generative models, we develop graphon-mixture-aware mixup (GMAM) and model-aware graph contrastive learning (MGCL).
Extensive experiments on both simulated and real-world datasets demonstrate strong empirical performance. 
In supervised learning, GMAM outperforms existing augmentation strategies, achieving new state-of-the-art accuracy on 6 out of 7 datasets.
In unsupervised learning, MGCL performs competitively across seven benchmark datasets and achieves the lowest average rank overall.

\end{abstract}

\section{Introduction}
\label{sec:intro}
Graphs provide a powerful framework for modeling complex relational data across diverse domains, including social networks~\citep{social1,social2}, bioinformatics~\citep{komb,bioinformatic1,grassrep}, and wireless systems~\citep{wireless,wireless1}.
A central goal in graph machine learning is to learn representations that capture the underlying principles governing these networks for tasks like graph classification~\citep{rey2025redesigning}. 
Graphons, or graph limits, have emerged as a foundational tool for this, offering a continuous, non-parametric generative model that describes the large-scale structure of graphs~\citep{graphon, borgs2008convergent}. 
By representing the latent process from which graphs are sampled, graphons enable principled analysis and data augmentation~\citep{navarro2022joint, gmixup}.

Given their utility, the accurate and efficient estimation of graphons~\citep{sas,gwb,ignr} from observed network data is a central problem. 
This has spurred the development of several modern estimation techniques. 
For instance, methods like Scalable Implicit Graphon Learning (SIGL)~\citep{sigl} leverage implicit neural representations and graph neural networks to learn a continuous graphon at arbitrary resolutions from large-scale graphs. 
Other recent work has shown that leveraging graph moments (subgraph counts) provides a computationally efficient path to scalable graphon estimation~\citep{moment}.
However, a critical limitation of these approaches is that they are designed to estimate a single, unified graphon for the observed set of graphs. 
This assumption fails in real-world settings, where datasets frequently consist of a \textit{mixture of populations}, with graphs generated from several distinct underlying distributions, and as a result, modeling a single graphon may not capture highly heterogeneous graph data~\citep{moment}.


This limitation also extends to modern representation learning paradigms, which often overlook the shared or distinct underlying models of different graphs. 
For example, graph augmentation techniques such as G-mixup~\citep{gmixup} assume a single graphon per class of data to serve as the latent representation for mixing. 
However, graphs within the same class may arise from multiple generative processes, leading to heterogeneous latent spaces and semantically inconsistent augmentations. 
Similarly, in graph contrastive learning~\citep{graphcl,joao}, each graph serves as an anchor, the positive sample whose representation is being learned, while all other graphs in the batch are treated as negatives.
This neglects the possibility that some graphs originate from the same underlying generative model, making them false negatives and hindering representation quality.

To explicitly address this challenge, we introduce a unified framework that models graph data as a mixture of underlying graphons. 
We rely on the key insight that \textit{densities of motifs} (empirical graph moments) serve as a powerful signature for the underlying generative model~\citep{borgs10moments}.
Building on foundational results, we establish a novel theoretical guarantee, showing that graphs sampled from graphons with a small cut distance will have similar motif densities with a higher probability.
By exploiting these motif densities, we first partition the graph dataset into coherent clusters, where each cluster corresponds to a distinct component of the graphon mixture. 
This motif-aware partitioning allows us to disentangle the generative mechanisms at play and incorporate the graphon mixture in downstream graph-level tasks, whether supervised or unsupervised. 
Our main contributions are as follows.  

\vspace{-.38cm}

\begin{itemize}
    \item 
    We introduce the first motif-based clustering approach that partitions graph datasets into coherent clusters, each corresponding to a distinct underlying graphon, thereby enabling model-aware graph-level learning.
    \item 
    For supervised settings, we propose graphon-mixture-aware mixup (GMAM) for graph data augmentation, which performs mixing conditioned on the inferred graphon components within each class and enhances graph classification performance.
    \item 
    For unsupervised learning, we develop a model-aware graph contrastive learning framework (MGCL) that leverages graphon-specific augmentations and model-aware negative sampling to improve representation quality.
\end{itemize}

\paragraph{Conflict of Interest Disclosure}
{The authors declare no competing interests.}

\section{Preliminary}
\label{sec:prelim}
\subsection{Graphon}

A graphon is defined as a bounded, symmetric, and measurable function $W : [0,1]^2 \rightarrow [0,1]$~\citep{graphon, avella2018centrality}.
By construction, a graphon acts as a {generative model for random graphs}, allowing the sampling of graphs that exhibit similar structural properties.
To generate an undirected graph $\ccalG$ with $N$ nodes from a given graphon $W$, the process consists of two main steps: (1) assigning each node a latent variable drawn uniformly at random from the interval $[0,1]$, and (2) connecting each pair of nodes with a probability given by evaluating $W$ at their respective latent variable values.
Formally, the steps are as follows:
\begin{align}
\label{eq:stochastic_sampling}
    \bbeta(i) &\sim \operatorname{Uniform} ([0,1]),\quad \forall \; i=1,\cdots,N, \\ \nonumber
    \bbA(i,j) &\sim \text{Bernoulli}\left(W(\bbeta(i), \bbeta(j))\right),\quad \forall\; i,j=1,\cdots,N, 
\end{align}
where the latent variables $\bbeta(i) \in [0,1]$ are independently drawn for each node $i$.
The dissimilarity between two graphons, $W_1$ and $W_2$, is measured by the cut distance, denoted as $d_{\text{cut}}(W_1, W_2)$~\citep{graphon}.

\paragraph{Graphon estimation.}
The generative process in~\eqref{eq:stochastic_sampling} can also be viewed in reverse: given a collection of graphs (represented by their adjacency matrix) \(\ccalD = \{\bbA_t\}_{t=1}^{M}\) that are sampled from an \emph{unknown} graphon \(W\), estimate \(W\).
Several methods have been proposed for this task~\citep{sas, sba, gwb, ignr, sigl, moment}.
We focus on SIGL~\citep{sigl}, a resolution-free method that, in addition to estimating the graphon, also \emph{infers the latent variables $\bbeta$}, making it particularly useful for model-driven augmentation.
This method parameterizes the graphon using an implicit neural representation (INR) \citep{siren}, a neural architecture defined as \(f_{\phi}(x, y): [0,1]^2 \rightarrow [0,1]\) where the inputs are coordinates from $[0,1]^2$ and the output approximates the graphon value $W$ at a particular position. 
More details of SIGL and graphon estimation are provided in Appendix~\ref{app:graphon}.

\begin{figure*}[t]
	\centering
	\includegraphics[width=0.95\textwidth]{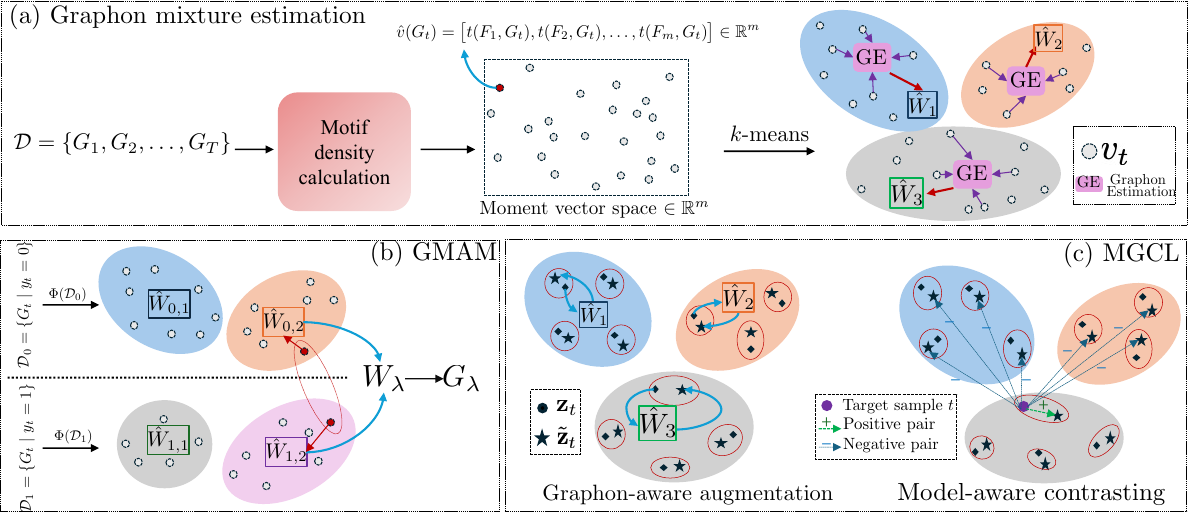}
	\caption{\small{
        Overview of the proposed framework. 
        (a)~Graphon mixture estimation via motif moment vectors, 
        (b)~Graphon mixture–aware mixup for data augmentation, 
        (c)~Model-aware GCL leveraging graphon-informed augmentations and model-aware contrastive loss.
        }}
    \vspace{-3mm}
	\label{fig:overview}
\end{figure*} 

\paragraph{Motif densities from graphons}
A graphon $W$ provides the theoretical expectation for the density of any subgraph, also known as a motif $F$. The non-induced homomorphism density, $t(F,W)$, counts all occurrences of a motif where at least the specified edges are present, regardless of any additional edges that might exist between the motif's vertices. The non-induced density is given as follows
\begin{equation}
\label{eq:motif_density}
t(F,W) = \int_{[0,1]^k} \prod_{(i,j) \in \mathcal{E}_F} W(\eta_i, \eta_j) \prod_{l \in \mathcal{V}_F} d\eta_l.
\end{equation}

The power of this framework lies in its connection to observable data; the theoretical value of $t(F,W)$ can be accurately estimated by the empirical motif counts found in a large graph sampled from the graphon $W$~\citep{graphon}. This makes motif densities a powerful tool for linking continuous graphon models to discrete, real-world networks.

\subsection{Graph Mixup}

Mixup~\citep{zhang2017mixup, verma2019manifold} has been successfully adapted to graphs in several works~\citep{gmixup, navarro2023graphmad}. 
The key idea is to augment the dataset by interpolating existing samples. 
In its standard form, mixup generates a new sample by linearly combining two randomly chosen inputs and their labels:
\begin{equation}
    x_{\text{new}} = \lambda x_i + (1-\lambda)x_j, 
    \quad 
    y_{\text{new}} = \lambda y_i + (1-\lambda)y_j,
\end{equation}
where $(x_i,y_i)$ and $(x_j,y_j)$ are two samples with one-hot labels.
In the graph domain, G-Mixup extends this idea by operating at the level of graphons~\citep{gmixup, navarro2023graphmad}. 
Given two classes, it first estimates their graphons, then interpolates between them, and finally generates new graphs and labels from the mixed graphon:
\begin{align*}
\text{Graphon estimation: } & \mathcal{D}_0 \to W_{\mathcal{G}}, \quad \mathcal{D}_1 \to W_{\mathcal{H}}, \\
\text{Graphon mixup: } & W_{\mathcal{I}} = \lambda W_{\mathcal{G}} + (1-\lambda) W_{\mathcal{H}}, \\
\text{Graph generation: } & \{I_1, \dots, I_m\} \stackrel{\text{i.i.d.}}{\sim} \mathbb{G}(K, W_{\mathcal{I}}), \\
\text{Label mixup: } & \mathbf{y}_{\mathcal{I}} = \lambda \mathbf{y}_{\mathcal{G}} + (1-\lambda)\mathbf{y}_{\mathcal{H}}.
\end{align*}

\subsection{Graph contrastive learning}\label{ssec:gcl_prior}
GCL~\citep{dgi,graphcl, joao, ad-gcl} aims to learn discriminative node or graph-level embeddings in a self-supervised manner, without relying on explicit labels. 
Given a collection of graphs $\{\ccalG_t\}_{t=1}^T$, the goal of graph-level GCL is to train an encoder \(\ccalE_{\theta}(\cdot)\) so that it produces expressive representations or embeddings. 
The encoder outputs an embedding per graph $\ccalG_t$, denoted by \(\bbz_t = \ccalE_{\theta}(\bbA_t, \bbX_t)\), where \(\bbz_t \in \mathbb{R}^{F}\).
InfoNCE-based methods are widely adopted in this setting~\cite{graphcl, ad-gcl, joao}.
These methods generate an augmented view of the input graph through transformations such as edge perturbation, feature masking, or subgraph sampling.  
Following this, let \(\bbz_t\) and \(\tbz_t\) denote the graph-level representations of the original and augmented views, respectively.  
The InfoNCE loss is then used to bring \(\bbz_t\) and \(\tbz_t\) (positive pair) closer in the embedding space while pushing them apart from representations of all other graphs in the batch (\(\tbz_{t'}\), for all \(t' \neq t\), i.e., negative samples).
Formally, the parameters of the encoder are achieved by minimizing the following loss function 
\begin{equation}
\label{eq:loss_info}
\ccalL_t = -\frac{1}{L}\sum_{t=1}^{L}\log \frac{\exp (\text{sim}(\bbz_t, \tilde\bbz_{t}) / \tau)}{\sum_{t'=1,\, t' \neq t}^L \exp (\text{sim}(\bbz_t, \tilde \bbz_{t'}) / \tau)}.
\end{equation}

\section{Methods}
\label{sec:methods}
We first present our unified framework for estimating the multiple underlying data distributions of graphs as a graphon mixture in Section~\ref{subsec:3.1}. 
We then introduce how to adapt and leverage these estimated graphons in the context of graph mixup and graph contrastive learning, as detailed in Sections~\ref{subsec:3.2} and~\ref{subsec:3.3}, respectively. 
An overview of these steps is shown in Figure~\ref{fig:overview}, and a detailed algorithm of these three steps is included in Appendix~\ref{app:alg}.

\subsection{Graphon Mixture Estimation}
\label{subsec:3.1}

The goal of this step, which is the core of our methodology, is to recover the multiple underlying generative models of the data, represented as a set of graphons $\{W_i\}_{i=1}^K$. 
We assume that each observed graph $G_t$ in the dataset is sampled from one of these graphons, though the assignment is unknown. 
Hence, the task reduces to identifying groups of graphs that are likely to originate from the same underlying distribution, and subsequently estimating a representative graphon for each group.

To formalize this idea, we leverage the concept of \emph{moment vectors}. 
For a graphon $W$, its motif densities provide expectations of subgraph counts (Equation~\eqref{eq:motif_density}). 
Let $\mathcal{F}=\{F_1, \dots, F_m\}$ denote a fixed family of motifs. 
The \emph{moment vector} of a graphon $W$ with respect to $\mathcal{F}$ is
\begin{equation}
    v(W) = \big[t(F_1,W), t(F_2,W), \dots, t(F_m,W)\big] \in \mathbb{R}^m.
\end{equation}
Similarly, for an observed graph $G$ sampled from $W$, we compute its empirical moment vector $\hat{v}(G)$ by normalizing motif counts in $G$. We present next a result to bound the difference between the empirical moment vectors of two graphs.

\begin{theorem}[Bounding Empirical Density Difference by Graphon Distance]
\label{thm:moment}
Let $G_1$ and $G_2$ be $n$-vertex graphs sampled from graphons $W_1$ and $W_2$, respectively, where $d_{\text{cut}}(W_1, W_2) \le \epsilon$.
Let $\mathcal{F}$ be a fixed family of motifs of bounded size. For any motif $F \in \mathcal{F}$ with $k=v(F)$ vertices and $e(F)$ edges, and for any target failure probability $\eta > 0$, we have
\begin{equation}
\begin{aligned}
& |t(F, G_1) - t(F, G_2)| 
\;\le\;
\underbrace{e(F)\epsilon}_{\text{Graphon difference}} \\
&+
2\underbrace{\left(
\sqrt{\frac{1}{2m}\log\frac{4}{\eta}}
+
\frac{e(F)}{\sqrt{n(n-1)}}\,
\sqrt{2\log\frac{4}{\eta}}
\right)}_{\text{Sampling error}} .
\end{aligned}
\label{eq:theorem1_bound}
\end{equation}
with probability at least $1 - 2\eta$, where  $m=\big\lfloor\frac{n}{k}\big\rfloor$. 
The proof is provided in Appendix~\ref{app:proof_thm1}.
\end{theorem}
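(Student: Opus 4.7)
The plan is to apply the triangle inequality to introduce the population densities $t(F,W_1)$ and $t(F,W_2)$ as intermediates,
\begin{equation*}
|t(F, G_1) - t(F, G_2)| \le |t(F, G_1) - t(F, W_1)| + |t(F, W_1) - t(F, W_2)| + |t(F, W_2) - t(F, G_2)|,
\end{equation*}
and handle the three pieces independently. The middle term is deterministic and is controlled by Lov\'asz's Counting Lemma, which gives $|t(F, W_1) - t(F, W_2)| \le e(F)\, d_{\text{cut}}(W_1, W_2) \le e(F)\epsilon$, producing the \emph{Graphon Difference} summand exactly. The outer two terms are symmetric and I would bound each by the same high-probability quantity, which accounts for the factor $2$ that multiplies the \emph{Sampling Error} in the theorem.

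For each sampling term $|t(F, G_i) - t(F, W_i)|$ I would separate the two sources of randomness by inserting the random weighted graph $\tdW_n$ on $n$ vertices whose edge weights are $W(\eta_u,\eta_v)$, obtained after drawing the iid Uniform$[0,1]$ latents but before the Bernoulli edge sampling:
\begin{equation*}
|t(F, G_i) - t(F, W_i)| \;\le\; \underbrace{|t(F, G_i) - t(F, \tdW_n)|}_{\text{edge randomness}} \;+\; \underbrace{|t(F, \tdW_n) - t(F, W_i)|}_{\text{latent randomness}}.
\end{equation*}
The latent term depends only on the iid uniforms $\eta_1,\ldots,\eta_n$, while the edge term depends only on the conditionally independent Bernoullis $\bbA_{uv}\mid\eta$.

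For the latent piece, I would partition $[n]$ into $m=\lfloor n/k\rfloor$ disjoint blocks of $k$ vertices. Evaluating the motif density on each block gives iid $[0,1]$-valued estimators whose common mean equals $t(F,W_i)$ (working with the injective homomorphism density to eliminate bias from non-injective maps), so Hoeffding's inequality on their average produces the $\sqrt{\log(4/\eta)/(2m)}$ summand at failure probability $\eta/2$. For the edge piece, I would condition on $\eta$ and apply McDiarmid's bounded-differences inequality to $t(F,G_i)$ viewed as a function of the $\binom{n}{2}$ independent Bernoulli edge variables: flipping any single $\bbA_{uv}$ changes the density by at most $O(e(F)/n^2)$, since only the homomorphisms $\phi$ that map some edge of $F$ onto $\{u,v\}$ are affected. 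Summing the squared bounded-differences constants across edges yields the $e(F)\sqrt{2\log(4/\eta)/(n(n-1))}$ summand at failure probability $\eta/2$. A union bound over the four concentration events (latent and edge, for each of the two graphs) at level $\eta/2$ each gives an overall success probability of $1-2\eta$, and summing the Counting Lemma term with the four sampling contributions recovers the stated inequality.

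I expect the main obstacle to be sharpening the McDiarmid bounded-differences constant so that it reproduces the $1/\sqrt{n(n-1)}$ scaling exactly, rather than a looser $n^{-3/2}\sqrt{n-1}$ form. This requires a careful enumeration of the homomorphisms $\phi:V(F)\to[n]$ that pass through a prescribed pair $\{u,v\}$ and consistent use of the injective normalization $(n)_k$, so that the block-wise Hoeffding bound and the edge-wise McDiarmid bound are both attached to the same empirical estimator of $t(F,W_i)$.
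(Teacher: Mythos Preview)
Your proposal is correct and follows essentially the same route as the paper: the same triangle-inequality decomposition, the Counting Lemma for the graphon term, and the same two-stage split of each sampling error into a vertex/latent piece (handled via the $m=\lfloor n/k\rfloor$ blocking plus Hoeffding) and an edge piece (handled via McDiarmid over the $\binom{n}{2}$ conditionally independent Bernoullis), with the same $\eta/2$ budget per event. Your intermediate $t(F,\tdW_n)$ coincides with the paper's conditional expectation $\hat t(F;X)$, and the concern you flag about matching the $1/\sqrt{n(n-1)}$ constant is only a bookkeeping matter once you commit to the injective normalization you already mention.
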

The bound in Equation~\eqref{eq:theorem1_bound} on the total difference in empirical motif densities is comprised of two components. The first term, the \textit{graphon difference}, arises from the intrinsic structural dissimilarity between the underlying graphons $W_1$ and $W_2$. The second term, the \textit{sampling error}, represents a novel and tighter concentration bound that quantifies the statistical uncertainty from the random graph generation process. This novel bound offers a significant improvement over classical approaches that can be derived from the literature \citep{graphon, borgs2008convergent}, as we demonstrate in Appendix~\ref{app:refined_bound}, where a full derivation and comparison are provided.

Theorem~\ref{thm:moment} reveals that if the underlying graphons $W_1$ and $W_2$ are close in cut distance, i.e., small $\epsilon$, then the corresponding empirical moment vectors, $\hat{v}(G_1)$ and $\hat{v}(G_2)$, are also close with high probability.
Alternatively, when the sampling error is sufficiently small (e.g., for sufficiently large graphs), a large observed discrepancy between the empirical moment vectors of two graphs provides evidence that their underlying graphons are not close in cut distance.
Formally, in this small-sampling-error regime, if $|t(F, G_1) - t(F, G_2)|$ is large for some motif $F$, it implies that $d_{\text{cut}}(W_1, W_2)$ must also be large.
By applying a union bound across all motifs in the family $\mathcal{F}$, this principle extends to the full moment vector: a large Euclidean distance $\|\hat{v}(G_1) - \hat{v}(G_2)\|_2$ implies that $G_1$ and $G_2$ likely originate from different graphon distributions. 
This justifies the use of motif-based embeddings for graph clustering, as distinct generating processes produce measurably distinct motif fingerprints.
Further theoretical analyses are included in Appendix~\ref{app:refined_bound}.

Equipped with the above theorem, we assign each graph $G_t$ a feature vector $v_t=\hat{v}(G_t) \in \mathbb{R}^m$, and collect them as $\{v_t\}_{t=1}^T$. 
We then apply $k$-means clustering to these vectors in order to group graphs according to their latent generative models. 
Let $\{C_1, \dots, C_K\}$ denote the resulting clusters.
Within each cluster $C_k$, the graphs share similar moment vectors, suggesting they arise from a common graphon $W_k$.
Then, we aim to estimate a graphon for each cluster of graphs.
To mitigate boundary effects (i.e., graphs near the edges of clusters that may not be well represented), we select the $L$ graphs closest to the cluster centroid in moment space. 
For each cluster $C_k$, we estimate the underlying graphon $W_k$ using these $L$ graphs. 
Any graphon estimation algorithm can be applied; in our implementation, we adopt SIGL~\citep{sigl}, which additionally provides the latent node variables $\bbeta$ alongside the graphon estimate.
Finally, each graph $G_t$ in cluster $C_k$ is associated with the estimated graphon $W_k$. 
This provides us with a graphon mixture model $\{W_1, \dots, W_K\}$, where each observed graph is assigned to exactly one mixture component.
To align with the dataset size, we set $K = \log T$ as a simple, dataset-size-aware default. 
Since $K$ acts on the motif-embedding space, it can equivalently be selected using standard cluster-number heuristics (e.g., the elbow method or silhouette analysis); we leave $K$ as a tunable hyperparameter and study its effect empirically in Appendix~\ref{app:ablation_cluster}.

In summary, we can view the overall pipeline as a mapping from a collection of graphs to a set of estimated graphons together with an assignment function. 
Let $\mathcal{D} = \{G_1, G_2, \dots, G_T\}$ denote the dataset of observed graphs. 
Our procedure outputs (i) a set of estimated graphons $\{ \hat{W}_1, \dots, \hat{W}_K \}$ and (ii) an assignment function 
\[
    \pi : \mathcal{D} \rightarrow \{1, \dots, K\},
\]
such that each graph $G_t$ is mapped to one of the estimated graphons $\hat{W}_{\pi(G_t)}$. 
Equivalently, the pipeline defines a function
\begin{equation}
    \Phi : \mathcal{D} \;\longrightarrow\; \big(\{\hat{W}_1,\dots,\hat{W}_K\},\, \pi\big),
\end{equation}
where $\Phi$ encapsulates the steps of (i) computing empirical moment vectors, (ii) clustering in moment space, and (iii) estimating graphons for each cluster. 
A detailed computational complexity analysis of the graphon mixture estimation procedure $\Phi$ is provided in Appendix~\ref{app:complexity}.

\subsection{Graphon Mixture-Aware Mixup (GMAM)}
\label{subsec:3.2}

In this section, we describe how to leverage the graphon mixture for graph data augmentation under the mixup framework.
This serves as a concrete example of how incorporating underlying generative models can benefit supervised downstream learning.
In contrast to existing G-mixup, which assumes a single graphon per class, our approach disentangles the multiple generative models that may exist within each class. 
This enables a more fine-grained and semantically valid interpolation strategy.

Consider a dataset of graphs with $C$ classes, $\mathcal{D} = \{(G_t,y_t)\}_{t=1}^T$, where $y_t \in \{1,\dots,C\}$. 
For each class $i$, we first collect its subset of graphs
\(\mathcal{D}_i = \{ G_t \;|\; y_t = i \}\).
We then apply the operator $\Phi$ from Section~\ref{subsec:3.1} to estimate the graphon mixture within each class. 
Formally, this yields
\begin{equation}
    \Phi(\mathcal{D}_i) \;\longrightarrow\; \big(\{\hat{W}_{i,1}, \dots, \hat{W}_{i,K_i}\}, \, \pi_i\big),
\end{equation}
where $\{\hat{W}_{i,1}, \dots, \hat{W}_{i,K_i}\}$ denotes the mixture of graphons estimated for class $i$, and $\pi_i$ is the assignment of each graph in $\mathcal{D}_i$ to its underlying graphon.
To perform GMAM, we first sample two graphs $G_a \in \mathcal{D}_i$ and $G_b \in \mathcal{D}_j$ from classes $i \neq j$. 
Instead of directly interpolating their raw structure, we trace back to their associated graphons, $\hat{W}_{i,\pi_i(G_a)}$ and $\hat{W}_{j,\pi_j(G_b)}$. 
We then construct a mixed graphon
\begin{equation}
    W_{\lambda} = \lambda \hat{W}_{i,\pi_i(G_a)} + (1-\lambda)\hat{W}_{j,\pi_j(G_b)}.
\end{equation}
Finally, we generate a new graph $G_{\lambda} \sim \mathbb{G}(n, W_{\lambda})$ according to the stochastic sampling process in~\eqref{eq:stochastic_sampling}, where $n$ is chosen to match the scale of the dataset. 
The associated label is interpolated in the standard mixup fashion: \(
\bby_{\lambda} = \lambda \bby_i + (1-\lambda)\bby_j,
\) where $\bby_i,\bby_j$ are one-hot encoded labels. 
By disentangling class distributions into mixtures of graphons, we preserve finer semantic structures during augmentation and avoid the unrealistic assumption that all graphs within a class share a single generative model.
This leads to higher-quality data augmentation and, consequently, improved graph classification performance. We evaluate this effect in Section~\ref{subsec:mixExp}.

\pgfplotsset{compat=1.17}

\newcommand\Star[3][]{
    \path[#1] (0:#3) -- (36:#2) -- (72:#3) -- (108:#2) -- (144:#3) -- (180:#2)
    -- (216:#3) -- (252:#2) -- (288:#3) -- (324:#2) -- cycle;
}
\definecolor{color0}{RGB}{76,114,176}
\definecolor{color1}{RGB}{221,132,82}
\definecolor{color2}{RGB}{85,168,104}
\definecolor{color3}{RGB}{196,78,82}
\definecolor{color4}{RGB}{129,112,176}
\definecolor{color5}{RGB}{144,121,63}
\definecolor{color6}{RGB}{204,154,203}

\begin{figure*}[t]
    \centering
    \begin{tikzpicture}
        \begin{axis}[
            width=0.48\textwidth,
            height=0.32\textwidth,
            xlabel={t-SNE Dimension 1},
            ylabel={t-SNE Dimension 2},
            label style={font=\footnotesize},
            tick label style={font=\scriptsize},
            ticks=none,
            grid=major,
            grid style={dashed, gray!30},
            scatter/classes={
                0={mark=*, color0, mark size=2.5pt, fill opacity=0.7, draw opacity=1.0, draw=black, line width=0.4pt},
                1={mark=*, color1, mark size=2.5pt, fill opacity=0.7, draw opacity=1.0, draw=black, line width=0.4pt},
                2={mark=*, color2, mark size=2.5pt, fill opacity=0.7, draw opacity=1.0, draw=black, line width=0.4pt},
                3={mark=*, color3, mark size=2.5pt, fill opacity=0.7, draw opacity=1.0, draw=black, line width=0.4pt},
                4={mark=*, color4, mark size=2.5pt, fill opacity=0.7, draw opacity=1.0, draw=black, line width=0.4pt},
                5={mark=*, color5, mark size=2.5pt, fill opacity=0.7, draw opacity=1.0, draw=black, line width=0.4pt},
                6={mark=*, color6, mark size=2.5pt, fill opacity=0.7, draw opacity=1.0, draw=black, line width=0.4pt}
            }
        ]
        \addplot[scatter, only marks, scatter src=explicit symbolic] 
        table [x=x, y=y, meta=label, col sep=comma] {figs/tsne_presaved_samples.txt};
        
        \addplot[
            only marks, 
            mark=none, 
            nodes near coords={\tikz{\Star[thick, fill=yellow, draw=black]{2pt}{6pt}}}, 
            nodes near coords align={center}
        ] 
        table [x=x, y=y, col sep=comma] {figs/tsne_presaved_moments.txt};

        \addplot[
            only marks,
            mark=none,
            nodes near coords={G\pgfmathprintnumber{\pgfplotspointmeta}},
            nodes near coords style={
                anchor=west, font=\bfseries\footnotesize, 
                draw=black, fill=white, fill opacity=1, text opacity=1,
                rounded corners, inner sep=1pt, line width=0.4pt, 
                xshift=6pt 
            },
            point meta=explicit symbolic
        ] 
        table [x=x, y=y, meta=label, col sep=comma] {figs/tsne_presaved_moments.txt};
        \end{axis}
    \end{tikzpicture}%
    \hspace{1cm}
    \begin{tikzpicture}
        \begin{axis}[
            width=0.48\textwidth,
            height=0.32\textwidth,
            label style={font=\footnotesize},
            tick label style={font=\scriptsize},
            xlabel={t-SNE Dimension 1},
            yticklabels={},
            ticks=none,
            grid=major,
            grid style={dashed, gray!30},
            scatter/classes={
                0={mark=*, color0, mark size=2.5pt, fill opacity=0.7, draw opacity=1.0, draw=black, line width=0.4pt},
                1={mark=*, color1, mark size=2.5pt, fill opacity=0.7, draw opacity=1.0, draw=black, line width=0.4pt},
                2={mark=*, color2, mark size=2.5pt, fill opacity=0.7, draw opacity=1.0, draw=black, line width=0.4pt},
                3={mark=*, color3, mark size=2.5pt, fill opacity=0.7, draw opacity=1.0, draw=black, line width=0.4pt},
                4={mark=*, color4, mark size=2.5pt, fill opacity=0.7, draw opacity=1.0, draw=black, line width=0.4pt},
                5={mark=*, color5, mark size=2.5pt, fill opacity=0.7, draw opacity=1.0, draw=black, line width=0.4pt},
                6={mark=*, color6, mark size=2.5pt, fill opacity=0.7, draw opacity=1.0, draw=black, line width=0.4pt}
            }
        ]
        \addplot[scatter, only marks, scatter src=explicit symbolic] 
        table [x=x, y=y, meta=label, col sep=comma] {figs/tsne_fixed_200_samples.txt};
        
        \addplot[
            only marks, 
            mark=none, 
            nodes near coords={\tikz{\Star[thick, fill=yellow, draw=black]{2pt}{6pt}}}, 
            nodes near coords align={center}
        ] 
        table [x=x, y=y, col sep=comma] {figs/tsne_fixed_200_moments.txt};

        \addplot[
            only marks,
            mark=none,
            nodes near coords={G\pgfmathprintnumber{\pgfplotspointmeta}},
            nodes near coords style={
                anchor=west, font=\bfseries\footnotesize, 
                draw=black, fill=white, fill opacity=1, text opacity=1,
                rounded corners, inner sep=1pt, line width=0.4pt, 
                xshift=6pt 
            },
            point meta=explicit symbolic
        ] 
        table [x=x, y=y, meta=label, col sep=comma] {figs/tsne_fixed_200_moments.txt};

        \addplot[
            only marks, 
            mark=none, 
            nodes near coords={\tikz{\Star[thick, fill=yellow, draw=black]{2pt}{6pt}}}, 
            nodes near coords align={center}
        ] 
        table [x=x, y=y, col sep=comma] {figs/tsne_fixed_200_node2.txt};
        
        \addplot[
            only marks,
            mark=none,
            nodes near coords={G\pgfmathprintnumber{\pgfplotspointmeta}},
            nodes near coords style={
                anchor=west, font=\bfseries\footnotesize, 
                draw=black, fill=white, fill opacity=1, text opacity=1,
                rounded corners, inner sep=1pt, line width=0.4pt, 
                xshift=-19pt 
            },
            point meta=explicit symbolic
        ] 
        table [x=x, y=y, meta=label, col sep=comma] {figs/tsne_fixed_200_node2.txt};
        \end{axis}
    \end{tikzpicture}
        
    \caption{t-SNE visualization of empirical moment vectors for graphs sampled from a mixture of graphons. Left: \emph{Varying} size, $n \sim U[75,300]$. Right: \emph{Fixed} size, $n=200$. Each color represents a different graphon.}
    \vspace{-5mm}
    \label{fig:tsne_single}
\end{figure*}

\subsection{Model-aware Graph Contrastive Learning (MGCL)}
\label{subsec:3.3}

We now extend the graphon mixture framework to contrastive learning, providing a second example of how incorporating underlying generative models can improve downstream learning in an unsupervised setting.
Given an unlabeled dataset of graphs $\mathcal{D}=\{G_t\}_{t=1}^T$, we first apply the operator $\Phi$ (Section~\ref{subsec:3.1}) to obtain the graphon mixture $\{\hat{W}_1, \dots, \hat{W}_K\}$ along with the assignment $\pi$ that maps each graph $G_t$ to its generating graphon $\hat{W}_{\pi(G_t)}$. 
This mapping benefits contrastive learning in two key ways:  
(1) it enables a principled graphon-aware augmentation strategy that generates meaningful positive pairs, and  
(2) it supports a model-aware loss function.

\paragraph{Graphon-aware augmentation.}
For each cluster $k$, the estimated graphon $\hat{W}_k$ characterizes the generative distribution of graphs in that cluster. 
We use this distribution to design a graphon-informed augmentation procedure.
Given a graph $G_t$ with adjacency matrix $\bbA_t$ that belongs to cluster $C_t$, we randomly select a subset $E_{\text{sel}}$ of $r\%$ of all node pairs. 
For each selected pair $(i,j) \in E_{\text{sel}}$, the corresponding adjacency entry is resampled according to the probability given by the cluster’s graphon:
\begin{equation}
\tilde{\bbA}_t(i,j) 
\sim \text{Bernoulli}\!\left(\hat{W}_{C_t}(\eta_i,\eta_j)\right),
\tilde{\bbA}_t(j,i) 
= \tilde{\bbA}_t(i,j),
\end{equation}
where $\eta_i,\eta_j \in [0,1]$ denote the latent positions associated with nodes $i$ and $j$.  
For all pairs $(i,j)\notin E_{\text{sel}}$, we retain the original edges: $\tilde{\bbA}_t(i,j) = \bbA_t(i,j)$.  
Note that, to leverage graphons for augmentation and edge resampling, we require the latent variables of the nodes ($\eta$) in order to query the edge probabilities from the graphon. 
Since SIGL~\citep{sigl} is specifically designed to estimate graphons together with the latent node positions, we adopt it for graphon estimation. 
Nevertheless, other graphon estimation methods~\citep{usvt,sas} can also be employed, as they implicitly assign latent variables by sorting nodes according to degree and using the degree as a proxy for the latent position.

This process injects structure-aware perturbations guided by the estimated generative model. 
Unlike naive random perturbations-such as edge or node dropping-commonly used in existing graph contrastive learning methods, graphon-aware augmentation assigns edge-specific probabilities informed by the graphon, producing more faithful augmented views.  
Passing the original graph and its graphon-aware augmentation through the encoder $\mathcal{E}_\theta$ with node features $\bbX_t$ yields the corresponding representations 
\begin{equation}
\begin{aligned}
\bbz_t &= \mathcal{E}_\theta({\bbA}_t, \bbX_t), \\
\tilde{\bbz}_t &= \mathcal{E}_\theta(\tilde{\bbA}_t, \bbX_t).
\end{aligned}
\end{equation}

Moreover, prior work has established a theoretical connection between graphons and the stability of graph neural network representations. 
In particular, \citet{graphonwnn} shows that when two graphs are generated from the same underlying graphon, the distance between their GNN outputs (($\bbz_t$ and $\tilde\bbz_t$ in our setting) is bounded, with the bound depending on properties of the graphon and the graph resolutions.
This provides a principled justification for graphon-aware augmentation: by generating augmented views according to the same underlying graphon, the resulting representations are guaranteed to remain close in the embedding space. 
In contrast, commonly used random perturbations in existing graph contrastive learning frameworks offer no such guarantee and may produce augmented graphs that deviate substantially from the anchor’s generative model. 
Consequently, conditioning augmentations on the inferred graphon leads to more meaningful positive pairs and better-aligned contrastive objectives.

\paragraph{Model-aware contrasting.}

We modify the InfoNCE objective~\citep{infoNCE} to account for graphon mixture assignments.  
Given a batch of $L$ graphs, we define a new loss for anchor $t$, and the overall loss function is 
\begin{equation}
\label{eq:loss_cluster}
\mathcal{L}_{\text{all}} = -\tfrac{1}{L}\sum_{t=1}^L 
\log \frac{\exp(\mathrm{sim}(\bbz_t,\tilde \bbz_t)/\tau)}
{\sum\limits_{\substack{t'=1,\, \boldsymbol{C_{t'} \neq C_t}}}^{L} 
\exp(\mathrm{sim}(\bbz_t,\tilde\bbz_{t'})/\tau)},
\end{equation}
where $\mathrm{sim}(\cdot,\cdot)$ is a similarity measure (e.g., cosine), $\tau$ is the temperature, and $C_t$ is the cluster index of $G_t$. 
Unlike standard InfoNCE, which pushes a graph away from all other graphs (including those with the same underlying model), our formulation only contrasts against graphs from \emph{different} graphons. 
The following result formalizes how the proposed loss enforces model-aware representations through its contrastive structure.

\begin{theorem}[Lower bound of model-aware loss]
For every graph $t$ let  $m_t \coloneqq |\{k : C_k \neq C_t\}|$, and $\bar{\bbz}^{(\neg C_t)} \coloneqq \frac{1}{m_t}\sum_{C_k \neq C_t} \tilde\bbz_k$ be the centroid of negative samples from clusters other than $C_t$. Then,
\label{thm:mgcl}
\begin{equation}  
\ln m_t 
+ \mathrm{sim}(\bbz_t, \bar{\bbz}^{(\neg C_t)})
- \mathrm{sim}(\bbz_t, \tilde{\bbz}_t)
\;\;\le\;\;
\ell_t ,
\end{equation}
where \(\ell_t = \log \frac{\exp(\mathrm{sim}(\bbz_t,\tilde \bbz_t)/\tau)}
{\sum\limits_{\substack{t'=1, C_{t'}\neq C_t}}^{L} 
\exp(\mathrm{sim}(\bbz_t,\tilde\bbz_{t'})/\tau)},\) is the loss of anchor \(t\).
\end{theorem}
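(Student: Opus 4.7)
The plan is to reduce the bound to a one-line application of Jensen's inequality for the exponential. Starting from the definition in Eq.~(\ref{eq:loss_cluster}), I would first expand
\[
\ell_t \;=\; -\,\mathrm{sim}(\bbz_t,\tilde\bbz_t)/\tau \;+\; \log\!\!\sum_{t':\,C_{t'}\neq C_t}\!\exp\!\big(\mathrm{sim}(\bbz_t,\tilde\bbz_{t'})/\tau\big).
\]
The $-\mathrm{sim}(\bbz_t,\tilde\bbz_t)$ term on the left of the claimed inequality cancels this same term in the expansion (taking $\tau=1$, as in the theorem's statement, or equivalently absorbing $\tau$ into the similarity), so the claim reduces to
\[
\ln m_t + \mathrm{sim}(\bbz_t,\bar{\bbz}^{(\neg C_t)}) \;\le\; \log\!\!\sum_{t':\,C_{t'}\neq C_t}\!\exp\!\big(\mathrm{sim}(\bbz_t,\tilde\bbz_{t'})\big).
\]

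For the key step, I would write $s_{t'}:=\mathrm{sim}(\bbz_t,\tilde\bbz_{t'})$ and $\bar s:=\tfrac{1}{m_t}\sum_{C_{t'}\neq C_t}s_{t'}$. Applying Jensen's inequality to the convex function $\exp$ yields $\exp(\bar s)\le\tfrac{1}{m_t}\sum_{C_{t'}\neq C_t}\exp(s_{t'})$, and taking logarithms gives $\ln m_t+\bar s\le \log\sum_{C_{t'}\neq C_t}\exp(s_{t'})$, which is exactly the reduced target above once we identify $\bar s$ with $\mathrm{sim}(\bbz_t,\bar{\bbz}^{(\neg C_t)})$. That identification is immediate from linearity of $\mathrm{sim}(\bbz_t,\cdot)$ in its second argument: for the inner-product similarity (and, by convention, for cosine similarity between $\bbz_t$ and the centroid of unit-normalized negatives, read as the raw inner product), $\mathrm{sim}\big(\bbz_t,\tfrac{1}{m_t}\sum\tilde\bbz_{t'}\big)=\tfrac{1}{m_t}\sum\mathrm{sim}(\bbz_t,\tilde\bbz_{t'})$.

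The only subtle point, and the place where I would be careful, is this linearity identification. If one instead adopts a strict normalized-cosine convention that renormalizes the centroid to unit length, an extra factor $\|\bar{\bbz}^{(\neg C_t)}\|\le 1$ (from the triangle inequality on unit vectors) enters and the clean equality no longer holds. I would therefore state the convention explicitly at the outset of the proof that $\mathrm{sim}(\bbz_t,\cdot)$ is linear in its second argument, as is standard in InfoNCE-style analyses, after which the three ingredients above (expansion and cancellation, Jensen, linearity) combine to give the claimed bound in a handful of lines.
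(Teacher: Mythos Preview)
Your proposal is correct and follows the paper's approach exactly: expand the loss, apply Jensen's inequality to the convex exponential to lower-bound the log-sum-exp by $\ln m_t$ plus the average similarity, and cancel the positive-pair term. Your explicit treatment of the linearity identification $\bar s = \mathrm{sim}(\bbz_t,\bar{\bbz}^{(\neg C_t)})$ is in fact more careful than the paper's own proof, which establishes the bound in terms of the \emph{average} of the individual similarities $\tfrac{1}{m_t}\sum_{k}\mathrm{sim}(\bbz_t,\tilde\bbz_k)$ and leaves the identification with the centroid similarity (as stated in the theorem) implicit.
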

The proof is provided in Appendix~\ref{app:proof_thm2}.
This lower bound shows that minimizing the proposed loss encourages each graph representation $\bbz_t$ to separate from the centroid of representations corresponding to graphs generated by \emph{different} underlying models, as captured by the term $\mathrm{sim}(\bbz_t, \bar{\bbz}^{(\neg C_t)})$.
In contrast to standard contrastive learning, where all other graphs in the batch are treated as negatives, our formulation restricts negative samples to graphs drawn from different graphons, thereby excluding structurally similar graphs from the same mixture component and reducing false negatives (see Appendix~\ref{app:false_negative}).
Moreover, the use of graphon-aware augmentations ensures that the positive sample $\tilde{\bbz}_t$ remains semantically consistent with the underlying generative process, strengthening the alignment term $\mathrm{sim}(\bbz_t, \tilde{\bbz}_t)$.
Together, these effects encourage the encoder to align each graph representation with its generating model while explicitly separating it from other models.
As a result, the proposed objective can be interpreted as performing contrast at the \emph{model level}, contrasting graphs against representations of negative models rather than individual instances, yielding representations that are more discriminative and semantically faithful.

\section{Experiments}
\label{sec:experiments}
\addtolength{\tabcolsep}{-0.3em}
\begin{table*}[t!]
\scriptsize
\centering
\caption{Graph classification accuracy of different mixup methods compared with GMAM.
(Notation: {\bf Best}, \underline{second best})}
\label{tab:gmixup}
\begin{tabular}{l|ccccccc}
\toprule
\textbf{Methods} 
& \textbf{IMDB-B} 
& \textbf{IMDB-M} 
& \textbf{REDD-B} 
& \textbf{REDD-M5} 
& \textbf{REDD-M12} 
& \textbf{COLLAB} 
& \textbf{AIDS} \\
\midrule
Vanilla       
& 71.30 $\pm$ \tiny{4.36} 
& 48.80 $\pm$ \tiny{2.54} 
& 89.15 $\pm$ \tiny{2.47} 
& 53.17 $\pm$ \tiny{2.26} 
& \underline{49.95} $\pm$ \tiny{0.98} 
& 79.39 $\pm$ \tiny{1.24} 
& 98.00 $\pm$ \tiny{1.20} \\

DropEdge      
& 70.50 $\pm$ \tiny{3.80} 
& 48.73 $\pm$ \tiny{4.08} 
& 87.45 $\pm$ \tiny{3.91} 
& 54.11 $\pm$ \tiny{1.94} 
& 49.77 $\pm$ \tiny{0.76} 
& 78.32 $\pm$ \tiny{1.31} 
& 92.87 $\pm$ \tiny{1.45} \\

DropNode      
& 72.00 $\pm$ \tiny{6.97} 
& 45.67 $\pm$ \tiny{2.59} 
& 88.60 $\pm$ \tiny{2.52} 
& 53.97 $\pm$ \tiny{2.11} 
& \underline{49.95} $\pm$ \tiny{1.70} 
& 80.01 $\pm$ \tiny{1.66} 
& 96.25 $\pm$ \tiny{1.24} \\

SubMix        
& 71.70 $\pm$ \tiny{6.20} 
& 49.80 $\pm$ \tiny{4.01} 
& 90.45 $\pm$ \tiny{1.93} 
& 54.27 $\pm$ \tiny{2.92} 
& 42.58 $\pm$ \tiny{12.30} 
& 80.11 $\pm$ \tiny{0.75} 
& 96.18 $\pm$ \tiny{1.33} \\

M-Mixup       
& 72.00 $\pm$ \tiny{5.14} 
& 48.67 $\pm$ \tiny{5.32} 
& 87.70 $\pm$ \tiny{2.50} 
& 52.85 $\pm$ \tiny{1.03} 
& 49.81 $\pm$ \tiny{0.80} 
& 77.00 $\pm$ \tiny{2.20} 
& -- \\

S-Mixup       
& 73.40 $\pm$ \tiny{6.26} 
& 50.13 $\pm$ \tiny{4.34} 
& 90.55 $\pm$ \tiny{2.11} 
& 55.19 $\pm$ \tiny{1.99} 
& 49.51 $\pm$ \tiny{1.59} 
& -- 
& -- \\

\textit{G}-Mixup 
& 72.40 $\pm$ \tiny{5.64} 
& 49.93 $\pm$ \tiny{2.82} 
& 90.20 $\pm$ \tiny{2.84} 
& 54.33 $\pm$ \tiny{1.99} 
& 49.72 $\pm$ \tiny{0.48} 
& 79.05 $\pm$ \tiny{1.25} 
& 97.80 $\pm$ \tiny{0.90} \\

SIGL         
& 73.95 $\pm$ \tiny{2.64} 
& 50.70 $\pm$ \tiny{1.41} 
& \underline{91.93} $\pm$ \tiny{0.94} 
& 55.82 $\pm$ \tiny{1.35} 
& 49.73 $\pm$ \tiny{0.62} 
& \underline{80.15} $\pm$ \tiny{0.60} 
& 97.93 $\pm$ \tiny{0.96} \\

MomentMixup   
& \underline{74.30} $\pm$ \tiny{2.70} 
& \underline{50.95} $\pm$ \tiny{1.93} 
& 91.80 $\pm$ \tiny{1.20} 
& \underline{56.09} $\pm$ \tiny{1.62} 
& 49.83 $\pm$ \tiny{1.01} 
& 79.75 $\pm$ \tiny{0.59} 
& \textbf{98.50} $\pm$ \tiny{0.60} \\

\midrule
GMAM          
& \textbf{74.45} $\pm$ \tiny{1.15} 
& \textbf{51.03} $\pm$ \tiny{1.63} 
& \textbf{92.25} $\pm$ \tiny{0.82} 
& \textbf{56.46} $\pm$ \tiny{0.95} 
& \textbf{50.18} $\pm$ \tiny{0.50} 
& \textbf{80.25} $\pm$ \tiny{0.52} 
& \underline{98.20} $\pm$ \tiny{0.51} \\
\bottomrule
\end{tabular}
\end{table*}
\addtolength{\tabcolsep}{0.3em}
\renewcommand{\arraystretch}{1.}

We evaluate our proposed pipeline along three main directions. 
First, in Section~\ref{subsec:synExp}, we test whether clustering based on moment vectors can successfully separate graphs generated from different underlying models using a variety of simulated settings.
Next, in Sections~\ref{subsec:mixExp} and~\ref{subsec:gclExp}, we assess the effectiveness of mixture-aware extensions on real-world downstream tasks, namely GMAM and MGCL, respectively.
Provided in Appendix~\ref{app:exp_detail} for all three types of experiments are details on hyperparameter settings (Appendix~\ref{app:hyper}), the datasets (Appendix~\ref{app:datadetail}), baseline methods (Appendix~\ref{app:baseline}), and computational resources (Appendix~\ref{app:compute}); our code is available at {\url{https://github.com/aliaaz99/moments2models}}.

\subsection{Synthetic experiment}
\label{subsec:synExp}

To empirically validate the theoretical framework presented in Section~\ref{subsec:3.1}, we conduct a synthetic experiment designed to test the efficacy of moment vectors in distinguishing and clustering graphs sampled from a mixture of different graphon distributions. 
The core hypothesis is that graphs generated from distinct graphons will exhibit measurably different motif densities, forming separable clusters in the moment vector space.
We generate a dataset composed of graphs sampled from a mixture of $K=7$ distinct, predefined graphon models, $\{W_k\}_{k=0}^6$ (Appendix~\ref{app:gt_graphon}). 
These models, include a range of functional forms to ensure structural diversity, generating graphs of varying density through models that are, for example, linear or exponential in nature. 
For each graphon model $W_k$, we sample a collection of graphs, creating a dataset where the ground-truth generating model for each graph is known. 

To analyze the impact of graph size on the concentration of empirical moments around their theoretical means (Theorem~\ref{thm:moment}), we structure our experiment into two distinct scenarios: (i) \emph{Varying} size, where $n$ is drawn uniformly from $[75,300]$;
and (ii) \emph{Fixed} size, with $n=200$. For each scenario, we generate a balanced dataset with an equal number of graphs from each of the seven graphon classes.
For each sampled graph, we compute its empirical moment vector using the densities of all connected motifs with up to 4 nodes, resulting in feature vectors of size 9. We then apply the clustering algorithm introduced in Section~\ref{subsec:3.1} to cluster these vectors and measure the accuracy by comparing the assignments to the ground-truth labels. 
This directly tests the ability of moment vectors to partition a mixture of graphs generated from different underlying models.
We visualize a low-dimensional representation of the moment vectors by projecting them into two dimensions using t-SNE in Figure~\ref{fig:tsne_single}.

In these plots, each point represents a graph sampled from one of seven graphon distributions, and the stars denote the ground-truth moment vectors $v(W_k)$ for each underlying graphon $W_k$. 
The visualizations empirically confirm our theoretical framework. 
According to Theorem~\ref{thm:moment}, the distance between the empirical moment vectors of two sampled graphs, $\hat{v}(G_1)$ and $\hat{v}(G_2)$, is influenced by two primary factors, namely the distance between their underlying graphons, $d_{\text{cut}}(W_1, W_2)$, and a sampling error term which decreases as the number of nodes $n$ grows. 
In the \emph{varying} size scenario (Figure~\ref{fig:tsne_single}(a)), the higher variance in graph sizes leads to more diffuse clusters, as smaller graphs introduce larger sampling errors. 
Conversely, in the \emph{fixed} size scenario with $n=200$ (Figure~\ref{fig:tsne_single}(b)), the empirical vectors concentrate more tightly around their respective ground-truth means, forming more distinct and separable clusters.
The separation between clusters is determined by the distance between their underlying graphons. 
Table~\ref{tab:gw_distances} (Appendix~\ref{app:gt_graphon}) reports the pairwise Gromov-Wasserstein (GW) distances, used here as a practical proxy for the cut distance since the latter can be relaxed to the GW distance of step functions~\citep{gwb}. 
As expected, graphons 2 and 3 have a very small GW distance of $0.024$, reflected in the overlap of their green and red clusters, while graphons with larger distances, such as 0 and 4 ($0.530$), yield well-separated clusters.

\addtolength{\tabcolsep}{-0.2em}
\begin{table}[h]
\scriptsize
\centering
\caption{Clustering accuracy (\%) with different embeddings.}
\label{tab:acc_results}
\begin{tabular}{lcc}
\toprule
\textbf{Method} 
& \textbf{Varying} 
& \textbf{Fixed} \\
\midrule
Theory        
& 81.4 
& 82.9 \\

GCN           
& 58.6 
& 64.7 \\

GIN           
& 25.7 
& 60.9 \\

Graph2Vec     
& 28.6 
& 62.0 \\

DeepWalk      
& 25.7 
& 28.9 \\

Spectral      
& 22.9 
& 21.7 \\

DisenGCN      
& 18.0
& 19.2 \\

\midrule
MBC (our)     
& \textbf{80.0} 
& \textbf{79.3} \\
\bottomrule
\end{tabular}
\end{table}
\addtolength{\tabcolsep}{0.4em}

Quantitatively, our clustering performance is reported in Table~\ref{tab:acc_results}. 
The Theory-Based accuracy is computed by assigning each graph to the cluster of the nearest ground-truth moment vector in Euclidean space, providing an upper bound on performance. 
Other methods apply k-means clustering on the embeddings obtained by different methods. 
Our proposed method, MBC (moment-based clustering), achieves accuracies of $80.0$ and $79.3$ for the varying and fixed size settings, respectively, outperforming all other embedding methods and closely approaching the theory-based scores of $81.4$ and $82.9$. 
This explicitly demonstrates that moment vectors provide a robust foundation for clustering graphs generated from different underlying graphons. 
The higher accuracy in the fixed-size setting directly reflects the tighter cluster formations observed in the t-SNE visualization, further validating the concentration properties outlined in our theory. 

Further experimental results are reported in the appendix, including an ablation study on the number of motifs (Appendix~\ref{app:ablation_motif}), 
the effect of graph size on moment concentration (Appendix~\ref{app:small-graph}),
an analysis of the effect of the number of clusters (Appendix~\ref{app:ablation_cluster_syn}), 
an investigation of varying the number of mixture components (Appendix~\ref{app:ablation_mixture_comp}), 
and evaluating the quality of the estimated graphons (Appendix~\ref{app:graphon_quality}).

\addtolength{\tabcolsep}{-0.35em}
\begin{table*}[t]
\scriptsize
\centering
\caption{Unsupervised representation learning classification accuracy (\%).
A.R. denotes the average rank. (Notation: {\bf Best}, \underline{second best})}
\label{tab:unsupervised}
\vspace{-.2cm}
\begin{tabular}{l|ccccccc|c}
\toprule
\textbf{Methods} 
& \textbf{NCI1} 
& \textbf{PROTEINS} 
& \textbf{DD} 
& \textbf{MUTAG} 
& \textbf{COLLAB} 
& \textbf{RDT-B} 
& \textbf{IMDB-B} 
& \textbf{A.R.} \\
\midrule
InfoGraph 
& 76.20 $\pm$ \tiny{1.0} 
& 74.44 $\pm$ \tiny{0.3} 
& 72.85 $\pm$ \tiny{1.8} 
& 89.01 $\pm$ \tiny{1.1} 
& 70.65 $\pm$ \tiny{1.1} 
& 82.50 $\pm$ \tiny{1.4} 
& \textbf{73.03} $\pm$ \tiny{0.9} 
& 7.00 \\

GraphCL 
& 77.87 $\pm$ \tiny{0.4} 
& 74.39 $\pm$ \tiny{0.4} 
& 78.62 $\pm$ \tiny{0.4} 
& 86.80 $\pm$ \tiny{1.3} 
& 71.36 $\pm$ \tiny{1.1} 
& 89.53 $\pm$ \tiny{0.8} 
& 71.14 $\pm$ \tiny{0.4} 
& 6.42 \\

MVGRL 
& 76.64 $\pm$ \tiny{0.3} 
& 74.02 $\pm$ \tiny{0.3} 
& 75.20 $\pm$ \tiny{0.4} 
& 75.40 $\pm$ \tiny{7.8} 
& \textbf{73.10} $\pm$ \tiny{0.6} 
& 82.00 $\pm$ \tiny{1.10} 
& 63.60 $\pm$ \tiny{4.2} 
& 8.85 \\

JOAO 
& 78.07 $\pm$ \tiny{0.5} 
& 74.55 $\pm$ \tiny{0.4} 
& 77.32 $\pm$ \tiny{0.5} 
& 87.35 $\pm$ \tiny{1.0} 
& 69.50 $\pm$ \tiny{0.3} 
& 85.29 $\pm$ \tiny{1.3} 
& 70.21 $\pm$ \tiny{3.1} 
& 7.85 \\

JOAOv2 
& 78.36 $\pm$ \tiny{0.5} 
& 74.07 $\pm$ \tiny{1.1} 
& 77.40 $\pm$ \tiny{1.1} 
& 87.67 $\pm$ \tiny{0.8} 
& 69.33 $\pm$ \tiny{0.3} 
& 86.42 $\pm$ \tiny{1.4} 
& 70.83 $\pm$ \tiny{0.3} 
& 7.28 \\

AD-GCL 
& 73.90 $\pm$ \tiny{0.8} 
& 73.30 $\pm$ \tiny{0.5} 
& 75.80 $\pm$ \tiny{0.9} 
& 88.70 $\pm$ \tiny{1.9} 
& 72.00 $\pm$ \tiny{0.6} 
& 90.10 $\pm$ \tiny{0.9} 
& 70.20 $\pm$ \tiny{0.7} 
& 7.28 \\

AutoGCL 
& 78.32 $\pm$ \tiny{0.5} 
& 69.73 $\pm$ \tiny{0.4} 
& 75.75 $\pm$ \tiny{0.6} 
& 85.15 $\pm$ \tiny{1.1} 
& 71.40 $\pm$ \tiny{0.7} 
& 86.60 $\pm$ \tiny{1.5} 
& 72.00 $\pm$ \tiny{0.4} 
& 7.14 \\

simGRACE 
& \underline{79.10} $\pm$ \tiny{0.4} 
& \underline{75.30} $\pm$ \tiny{0.1} 
& 77.40 $\pm$ \tiny{1.1} 
& 89.00 $\pm$ \tiny{1.3} 
& 71.70 $\pm$ \tiny{0.8} 
& 89.50 $\pm$ \tiny{0.9} 
& 71.30 $\pm$ \tiny{0.8} 
& 4.14 \\

RGCL 
& 78.10 $\pm$ \tiny{1.0} 
& 75.00 $\pm$ \tiny{0.4} 
& \underline{78.90} $\pm$ \tiny{0.5} 
& 87.70 $\pm$ \tiny{1.0} 
& 71.00 $\pm$ \tiny{0.7} 
& 90.30 $\pm$ \tiny{0.6} 
& 71.90 $\pm$ \tiny{0.9} 
& 4.71 \\

DRGCL 
& 78.70 $\pm$ \tiny{0.4} 
& 75.20 $\pm$ \tiny{0.6} 
& 78.40 $\pm$ \tiny{0.7} 
& \underline{89.50} $\pm$ \tiny{0.6} 
& 70.60 $\pm$ \tiny{0.8} 
& \textbf{90.80} $\pm$ \tiny{0.3} 
& 72.00 $\pm$ \tiny{0.5} 
& \underline{3.57} \\
\midrule
MGCL 
& \textbf{79.18} $\pm$ \tiny{0.48} 
& \textbf{75.54} $\pm$ \tiny{0.31} 
& \textbf{79.07} $\pm$ \tiny{0.24} 
& \textbf{90.03} $\pm$ \tiny{1.32} 
& \underline{72.28} $\pm$ \tiny{0.63} 
& \underline{90.46} $\pm$ \tiny{0.14} 
& \underline{72.28} $\pm$ \tiny{0.52} 
& \textbf{1.42} \\
\bottomrule
\end{tabular}
\vspace{-.3cm}
\end{table*}
\addtolength{\tabcolsep}{0.35em}
\renewcommand{\arraystretch}{1.}

\subsection{GMAM}
\label{subsec:mixExp}

In this section, we evaluate the effectiveness of our proposed mixture-aware graph mixup method in Section~\ref{subsec:3.2} on seven real-world datasets from the TUDatasets benchmark~\citep{TUDataset}, which are commonly used for mixup evaluation.
We compare against a vanilla baseline without augmentation, as well as a broad range of simple augmentations and state-of-the-art mixup variants.
Full details on datasets and baselines are provided in Appendix~\ref{app:datadetail} and Appendix~\ref{app:baseline}.
For fairness, we adopt the same GNN structure (depth and architecture) across all methods, along with identical training hyperparameters. 
All steps of graphon mixture estimation and the subsequent mixup augmentation are performed strictly within the training set.
The augmentation ratio (i.e., the proportion of generated graphs added to training) is also fixed consistently across methods.

Table~\ref{tab:gmixup} reports the results. 
Our method consistently improves performance across multiple datasets, achieving the highest accuracy on six of seven benchmarks and ranking second on the remaining one.
Notably, when restricting our framework to a single graphon per class, GMAM reduces to the SIGL-based mixup variant reported in Table~\ref{tab:gmixup}, which GMAM consistently outperforms. 
These results demonstrate that explicitly modeling class distributions as mixtures of graphons rather than assuming a single graphon per class leads to higher-quality augmentations and improved graph classification performance, outperforming existing augmentation strategies, including those that do not rely on graphon-based mixup.
An ablation study examining the effect of the number of estimated mixture components is presented in Appendix~\ref{app:ablation_cluster_gmam}. 
In addition, the estimated underlying graphons for different classes are illustrated in Appendix~\ref{app:underlying_models}, revealing the presence of distinct generative models across classes.

\subsection{MGCL}
\label{subsec:gclExp}

We compare MGCL with a variety of state-of-the-art GCL baselines. 
The full set of competing methods is detailed in Appendix~\ref{app:baseline}.
We evaluate on seven datasets from the TUDataset~\citep{TUDataset} collection. 
The experiments follow the linear evaluation protocol~\citep{peng2020graph} in the literature and previous works, where models are first trained in an unsupervised manner, and the resulting embeddings are subsequently used for downstream tasks. 
We adopt the evaluation protocol from~\citep{infograph}, performing 10-fold cross-validation on each dataset. 
The resulting graph-level embeddings are used to train an SVM classifier, and we report the average performance across the folds.
To summarize performance, we assign a rank to each method based on its performance on each dataset, and the average rank (A.R.) is then computed as the mean of these ranks across all datasets, consistent with prior work.

As reported in Table~\ref{tab:unsupervised}, MGCL demonstrates strong performance across a broad range of graph-level benchmarks.
MGCL ranks first on four out of seven datasets and second on the remaining three.
On IMDB-B, InfoGraph attains the best performance, suggesting that augmentation-based strategies may be less effective for this particular benchmark.
Nevertheless, MGCL consistently outperforms other augmentation-based methods, such as GraphCL and JOAO, as well as more recent approaches including DRGCL and simGRACE.
Overall, MGCL achieves the lowest average rank (A.R.) of 1.42 among the competing methods, indicating its ability to learn robust and transferable graph-level representations.
Importantly, MGCL constitutes a simple yet principled extension of GraphCL: by explicitly modeling the dataset as a mixture of underlying generative models and replacing random augmentations with graphon-aware augmentations, it consistently improves upon GraphCL and achieves performance comparable to, or better than, more complex recent methods that either learn augmentations or introduce sophisticated contrastive objectives.
These results suggest that incorporating graphon mixture estimation into a standard contrastive learning pipeline can lead to meaningful performance improvements.

Additional analyses further support the effectiveness of the proposed model-aware graph contrastive learning framework. 
In particular, model-aware clustering is shown to reduce the false negative rate in contrastive learning (Appendix~\ref{app:false_negative}). 
An ablation study on the number of clusters used in MGCL is provided in Appendix~\ref{app:ablation_cluster_mgcl}, and the estimated graphons are illustrated in Figure~\ref{fig:graphons} in Appendix~\ref{app:underlying_models}.
We empirically disentangle the contributions of graphon-informed augmentation (GIA) and cluster-aware loss (CAL) in Appendix~\ref{app:mgcl-ablation}.
Finally, a detailed runtime comparison of incorporating the graphon mixture estimation step into both Mixup and graph contrastive learning pipelines is provided in Appendix~\ref{app:runtime}.

\section{Conclusion}
\label{sec:conclusion}
We proposed a unified framework for inferring mixtures of underlying generative models (graphons) from observed graphs. 
Instead of assuming a single distribution per dataset, our approach disentangles multiple generative models and leverages this structure for downstream learning.
To the best of our knowledge, we introduced the first method that estimates a graphon mixture from observed graphs based on motif moment vectors and uses it for downstream graph learning, going beyond prior single-graphon approaches. This is supported by tighter concentration guarantees for graph moments, and we incorporated the estimated mixture structure into two widely studied downstream graph learning tasks.
Specifically, we developed graphon-mixture-aware mixup (GMAM) for model-level data augmentation in supervised settings, and model-aware graph contrastive learning (MGCL), which combines graphon-informed augmentations with a principled, model-aware loss for unsupervised representation learning.
Empirically, we showed that moment-based clustering recovers ground-truth models, GMAM achieves state-of-the-art results on six benchmarks, and MGCL yields more semantically faithful representations, resulting in higher accuracy in downstream task classification.

While our framework is effective, its performance may be influenced by factors such as graph size and graph sparsity. 
For smaller graphs in particular, higher sampling variance in motif counts can cause their embeddings to deviate from their theoretical means, making it challenging to reliably identify their true underlying generative distribution. 
A discussion of both the small-graph regime and the dense-graphon scope, along with possible relaxations to sparse-graph settings, is provided in Appendix~\ref{app:scope_limit}.
Finally, the strong separability observed in motif-based embeddings suggests a promising direction for graph representation learning and graph-level anomaly detection, where graphs that deviate in moment space may be identified as arising from anomalous generative processes.






\section*{Impact Statement}

In this paper, we introduce a principled framework for modeling graph datasets as mixtures of underlying generative models using graph moments and graphons, along with model-aware extensions of mixup and graph contrastive learning.
These contributions promote more faithful and effective graph representation learning by explicitly accounting for heterogeneity in graph-generating processes, rather than assuming a single latent distribution.
The proposed framework may enable practitioners to better understand, analyze, and represent structural variability in real-world graph datasets.

\section*{Acknowledgments}  
This work was supported by the NSF under grants EF-2126387 and CCF-2340481.


\bibliography{example_paper}
\bibliographystyle{icml2026}

\newpage
\appendix
\onecolumn

\section{Algorithms}
\label{app:alg}
We present the three core algorithms that constitute our framework. The first, Algorithm~\ref{alg:graphon-mixture}, details the operator $\Phi$ for estimating a graphon mixture model from a graph dataset by clustering graphs based on their motif densities. The subsequent algorithms leverage this estimated mixture for downstream tasks: Algorithm~\ref{alg:gmam} introduces a novel mixup-based data augmentation strategy (GMAM) that interpolates between the generative graphon models, while Algorithm~\ref{alg:mgcl} presents a model-aware contrastive learning method (MGCL) that uses the mixture for both principled data augmentation and a more effective negative sampling strategy.
\begin{algorithm}[H]
\caption{Graphon Mixture Estimation (Operator $\Phi$)}
\label{alg:graphon-mixture}
\DontPrintSemicolon
\SetKwInOut{Input}{Input}
\SetKwInOut{Output}{Output}
\Input{Graph dataset $\mathcal{D}=\{G_t\}_{t=1}^T$; motif family $\mathcal{F}=\{F_1,\dots,F_m\}$; \#clusters $K=\log T$; per-cluster refinement size $L$; graphon estimator (e.g., SIGL).}
\Output{Estimated graphons $\{\hat{W}_1,\dots,\hat{W}_K\}$; assignment $\pi:\mathcal{D}\to\{1,\dots,K\}$.}
\BlankLine

\For{$t=1$ \KwTo $T$}{
  Compute empirical motif densities $\hat{v}(G_t)\in\mathbb{R}^m$ over $\mathcal{F}$.\;
}
Let $V \gets [\hat{v}(G_1),\dots,\hat{v}(G_T)]^\top \in \mathbb{R}^{T\times m}$.\;
Run $k$-means on $V$ with $K$ clusters to obtain clusters $\{C_1,\dots,C_K\}$ and centroids $\{\mu_1,\dots,\mu_K\}$.\;
Define assignment $\pi(G_t)\gets \arg\min_{k\in\{1,\dots,K\}}\|\hat{v}(G_t)-\mu_k\|_2$.\;
\For{$k=1$ \KwTo $K$}{
  Select $L$ graphs in $C_k$ closest to $\mu_k$ in moment space: $S_k \subseteq C_k$.\;
  Estimate cluster graphon $\hat{W}_k$ using $S_k$ (e.g., via SIGL), and obtain latent positions $\{\eta_i\}$.\;
}
\Return $\{\hat{W}_k\}_{k=1}^K$ and $\pi$.\;
\end{algorithm}

\begin{algorithm}[H]
\caption{Graphon Mixture-Aware Mixup (GMAM)}
\label{alg:gmam}
\DontPrintSemicolon
\SetKwInOut{Input}{Input}
\SetKwInOut{Output}{Output}
\Input{Labeled graphs $\mathcal{D}=\{(G_t,y_t)\}_{t=1}^T$, $y_t\in\{1,\dots,C\}$; operator $\Phi$ (Alg.~\ref{alg:graphon-mixture}) run \emph{per class}; mixing coefficient distribution $\lambda\sim \mathrm{Uniform}(0,0.2)$; target node count $n$; augmentation ratio $r\in(0,1]$.}
\Output{Augmented set $\widetilde{\mathcal{D}}=\{(G_\lambda^{(m)},\mathbf{y}_\lambda^{(m)})\}_{m=1}^{M}$ of size $M=\lceil rT\rceil$.}
\BlankLine

Partition data by class: $\mathcal{D}_i=\{G_t\mid y_t=i\}$ for $i=1,\dots,C$.\;
\For{$i=1$ \KwTo $C$}{
  $(\{\hat{W}_{i,1},\dots,\hat{W}_{i,K_i}\},\pi_i)\gets \Phi(\mathcal{D}_i)$.\;
}
$M \gets \lceil rT \rceil$, \quad $\widetilde{\mathcal{D}} \gets \varnothing$.\;

\For{$m=1$ \KwTo $M$}{
  Sample distinct classes $i\neq j$; sample graphs $G_a\in \mathcal{D}_i$, $G_b\in \mathcal{D}_j$.\;
  Identify graphons: $\hat{W}_{i,a}\gets \hat{W}_{i,\pi_i(G_a)}$, \quad $\hat{W}_{j,b}\gets \hat{W}_{j,\pi_j(G_b)}$.\;
  Sample $\lambda\sim \mathrm{Uniform}(0,0.2)$ and set $W_\lambda \gets \lambda \hat{W}_{i,a} + (1-\lambda)\hat{W}_{j,b}$.\;
  Sample latent positions $u_1,\dots,u_n \overset{\text{i.i.d.}}{\sim} \mathrm{Uniform}[0,1]$.\;
  \For{$1\le p<q\le n$}{
    Draw $A_\lambda(p,q)\sim \mathrm{Bernoulli}(W_\lambda(u_p,u_q))$ and set $A_\lambda(q,p)\gets A_\lambda(p,q)$.\;
  }
  Construct $G_\lambda^{(m)}$ from $A_\lambda$ (and optional node features).\;
  Set $\mathbf{y}_\lambda^{(m)} \gets \lambda \,\mathbf{e}_i + (1-\lambda)\,\mathbf{e}_j$ (one-hot $\mathbf{e}_i$).\;
  $\widetilde{\mathcal{D}} \gets \widetilde{\mathcal{D}} \cup \{(G_\lambda^{(m)},\mathbf{y}_\lambda^{(m)})\}$.\;
}
\Return $\widetilde{\mathcal{D}}$.\;
\end{algorithm}

\newpage

\begin{algorithm}[H]
\caption{Model-Aware Graph Contrastive Learning (MGCL)}
\label{alg:mgcl}
\DontPrintSemicolon
\SetKwInOut{Input}{Input}
\SetKwInOut{Output}{Output}
\Input{Unlabeled graphs $\mathcal{D}=\{G_t\}_{t=1}^T$; shared encoder $\mathcal{E}_\theta$; operator $\Phi$ (Alg.~\ref{alg:graphon-mixture}); temperature $\tau$; batch size $L$; augmentation rate $r\%$.}
\Output{Trained encoder parameters $\theta$.}
\BlankLine

$(\{\hat{W}_1,\dots,\hat{W}_K\},\pi)\gets \Phi(\mathcal{D})$ to obtain cluster assignments $C_t\gets \pi(G_t)$.\;
\For{epoch $=1,2,\dots$}{
  Shuffle $\mathcal{D}$ and partition into mini-batches $\mathcal{B}$ of size $L$.\;
  \For{batch $\mathcal{B}=\{G_t\}_{t=1}^L$}{
    \ForEach{$G_t$ in $\mathcal{B}$}{
      Let $A_t$ (adjacency) and $X_t$ (features) be inputs.\;
      \tcp{Graphon-aware augmentation using cluster graphon $\hat{W}_{C_t}$}
      Sample a subset $E_{\mathrm{sel}}$ of $r\%$ node pairs.\;
      \ForEach{$(i,j)\in E_{\mathrm{sel}}$}{
        Draw $\tilde{A}_t(i,j)\sim \mathrm{Bernoulli}\!\big(\hat{W}_{C_t}(\eta_i,\eta_j)\big)$; set $\tilde{A}_t(j,i)\gets \tilde{A}_t(i,j)$.\;
      }
      Set $\tilde{A}_t(i,j)\gets A_t(i,j)$ for all $(i,j)\notin E_{\mathrm{sel}}$.\;
      Compute embeddings $z_t\gets \mathcal{E}_\theta(A_t,X_t)$ and $\tilde{z}_t\gets \mathcal{E}_\theta(\tilde{A}_t,X_t)$.\;
    }
    \tcp{Model-aware InfoNCE: negatives only from different clusters}
    Initialize $\mathcal{L}_{\mathrm{batch}}\gets 0$.\;
    \For{$t=1$ \KwTo $L$}{
      Define negative index set $\mathcal{N}_t \gets \{t'\in\{1,\dots,L\}\mid C_{t'}\neq C_t\}$.\;
      \If{$\mathcal{N}_t=\varnothing$}{\textbf{continue} (skip or resample batch).}
      \[
      \ell_t \gets -\log \frac{\exp(\mathrm{sim}(z_t,\tilde{z}_t)/\tau)}
      {\sum\limits_{t'\in \mathcal{N}_t}\exp(\mathrm{sim}(z_t,\tilde{z}_{t'})/\tau)}
      \]
      $\mathcal{L}_{\mathrm{batch}} \gets \mathcal{L}_{\mathrm{batch}} + \ell_t$.\;
    }
    $\mathcal{L}_{\mathrm{batch}} \gets \mathcal{L}_{\mathrm{batch}}/|\{t:\mathcal{N}_t\neq\varnothing\}|$.\;
    Update $\theta$ via gradient step on $\mathcal{L}_{\mathrm{batch}}$.\;
  }
}
\Return $\theta$.\;
\end{algorithm}

\subsection{Complexity Analysis of Algorithm~\ref{alg:graphon-mixture}}
\label{app:complexity}

The overall time complexity $T_{\Phi}$ of the algorithm is the sum of its three constituent stages: motif counting, k-means clustering, and per-cluster graphon estimation. The combined complexity is given by:
$$T_{\Phi} = O\left( \underbrace{T (e_{\max} d_{\max} + n_{\max} d_{\max}^3)}_{\text{Motif Counting}} + \underbrace{I \cdot T \cdot m \log T}_{\text{k-means}} + \underbrace{L \cdot N_e \cdot n_{\max}^2 \log T}_{\text{Graphon Estimation}} \right)$$
Each term in this expression corresponds to a distinct phase of the algorithm. The first term, $O(T (e_{\max} d_{\max} + n_{\max} d_{\max}^3))$, represents the cost of computing the densities for all 9 motifs of size 4 across $T$ graphs, using an efficient graphlet counting algorithm like ORCA \citep{orca}. While this term's theoretical worst-case for dense graphs is $O(T \cdot n_{\max}^4)$, making it the apparent bottleneck, our own empirical analysis demonstrates a practical runtime that scales closer to cubic, $O(T \cdot n_{\max}^3)$. This significant practical speed-up is due to an \textit{extremely small leading constant} ($c \approx 2.97 \times 10^{-8}$), a key finding that underpins our method's efficiency \citep{moment}.

The second term, $O(I \cdot T \cdot m \log T)$, is the standard and well-established complexity for k-means clustering (e.g., via Lloyd's algorithm), which partitions the $T$ motif vectors in $\mathbb{R}^m$ into $K=\log T$ clusters over $I$ iterations. Finally, the third term, $O(L \cdot N_e \cdot n_{\max}^2 \log T)$, accounts for the per-cluster refinement stage. Here, a powerful graphon estimator, such as SIGL \citep{sigl}, is executed for each of the $K$ clusters on a representative subset of $L$ graphs, requiring $N_e$ training epochs.

In conclusion, while the theoretical complexity suggests motif counting is the most expensive step, its low empirical constant makes the entire pipeline computationally feasible and scalable. This practical efficiency allows our method to effectively operate on many smaller, subsampled graphs, which is a core aspect of its design.

\subsection{Runtime Analysis}
\label{app:runtime}

In this section, we compute three runtime components for both GMAM and MGCL:
\begin{enumerate}
    \item The time required to compute motif vectors for all graphs in the dataset (identical for GMAM and MGCL);
    \item The time required to perform the first step of our method, which includes clustering and graphon estimation. This step is applied {per class} in GMAM and {once for the entire dataset} in MGCL, using the default number of clusters;
    \item The time required for actual training and validation, which is independent of the previous steps—that is, motif computation and graphon estimation do not affect training time.
\end{enumerate}

Figure~\ref{fig:time} reports these three runtimes for each dataset and setup. As shown, the time required for motif computation, clustering, and graphon estimation is consistently much smaller than the training time, with the gap being especially pronounced in datasets containing a larger number of graphs, such as COLLAB, NCI1, and REDDIT-MULTI-5K.

Combined with the results in Section~\ref{sec:experiments}, which show that our method outperforms existing state-of-the-art approaches, the additional cost required for motif-based clustering is well justified.

Finally, we note that in these experiments, we use SIGL for graphon estimation. If runtime were of primary importance, one could instead employ faster estimators such as USVT or SAS.

\begin{figure*}[h]
	\centering
	\includegraphics[width=0.8\textwidth]{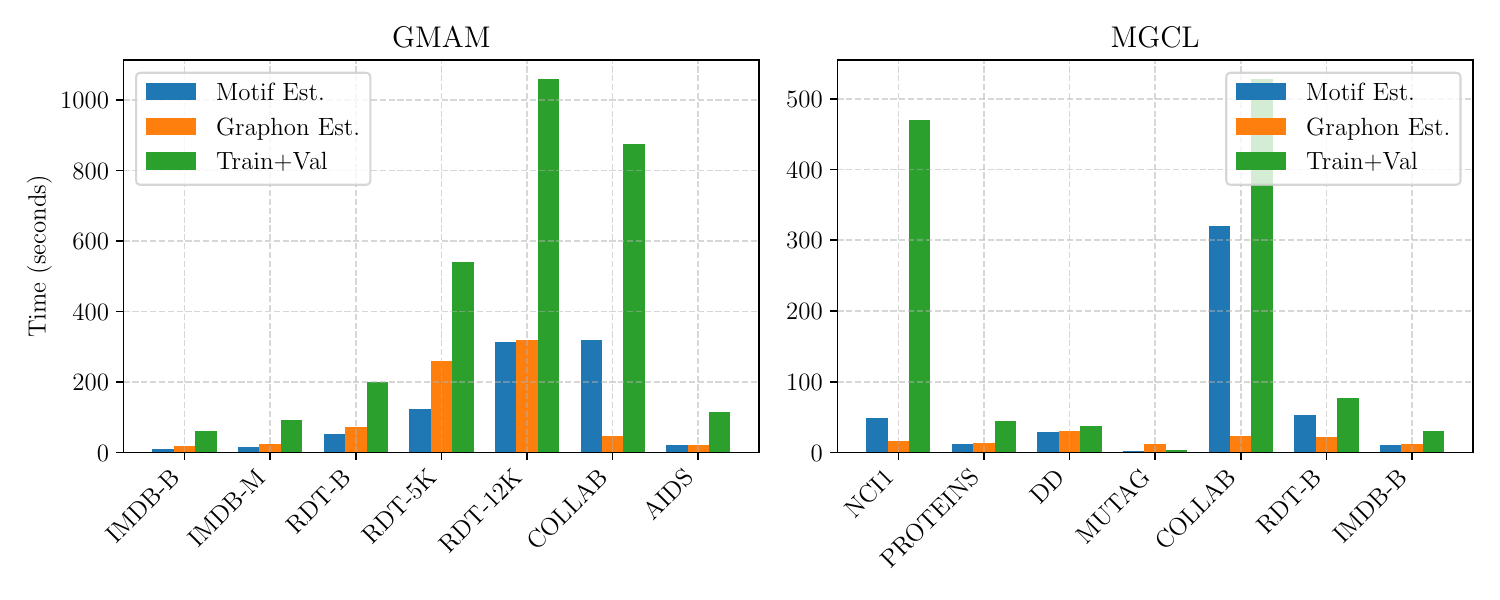}
	\caption{\small{
        Runtime analysis comparing the time required for motif vector computation, graphon estimation, and model training.
        (a)~GMAM
        (b)~MGCL
}}
    \vspace{-3mm}
	\label{fig:time}
\end{figure*}


\section{Proof of Theorem~\ref{thm:moment}}
\label{app:proof_thm1}

This appendix provides the proof for Theorem~\ref{thm:moment} and compares our novel error bound to the classical approach.
The proof relies on bounding the difference between the empirical densities using the triangle inequality:
$$|t(F, G_1) - t(F, G_2)| \le \underbrace{|t(F, W_1) - t(F, W_2)|}_{\text{Graphon Difference}} + \underbrace{|t(F, G_1) - t(F, W_1)|}_{\text{Sampling Error 1}} + \underbrace{|t(F, G_2) - t(F, W_2)|}_{\text{Sampling Error 2}}.$$

The \textbf{Graphon Difference} term is bounded by the Counting Lemma (cf. Theorem 10.23 in \cite{graphon}):
$$|t(F, W_1) - t(F, W_2)| \le e(F)d_{\text{cut}}(W_1, W_2) \le e(F)\epsilon.$$

The \textbf{Sampling Error} terms, $|t(F, G_i) - t(F, W_i)|$, are bounded using a novel two-stage analysis that separates the randomness from vertex selection and edge realization. This approach yields a tighter bound than classical single-stage methods. We decompose the total sampling error into two components:
$$|t(F,G)-t(F,W)| \le \underbrace{|\widehat{t}(F;X) - t(F,W)|}_{\text{Vertex Noise}} + \underbrace{|t(F,G) - \widehat{t}(F;X)|}_{\text{Edge Noise}}$$
where $\widehat{t}(F;X)$ is the conditional expectation of the density given the sampled vertex labels $X=(X_1,\dots,X_n)$. The following two lemmas bound the vertex and edge noise, respectively.

\begin{lemma}[Bounding Vertex Noise]
\textit{Let $m=\lfloor n/k\rfloor$. For any $\delta_v>0$,}
$$\Pr{|\widehat t(F;X)-t(F,W)|\ge \delta_v} \le 2\exp\!\bigl(-2m\,\delta_v^2\bigr).$$
\end{lemma}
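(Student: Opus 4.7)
My plan is to identify $\widehat{t}(F;X)$ as a bounded $U$-statistic of degree $k$ in the i.i.d.\ uniform variables $X_1,\dots,X_n$, and then invoke Hoeffding's 1963 concentration inequality for $U$-statistics. First, because the edges of $G$ are conditionally independent given the labels (see \eqref{eq:stochastic_sampling}), the conditional expectation has the explicit closed form
\begin{equation*}
\widehat{t}(F;X) \;=\; \E\!\left[t(F,G)\mid X_1,\dots,X_n\right] \;=\; \frac{1}{n^{\underline{k}}}\!\!\!\!\sum_{\phi \in \mathrm{Inj}([k],[n])}\!\! \prod_{(u,v)\in \mathcal{E}_F} W\bigl(X_{\phi(u)}, X_{\phi(v)}\bigr),
\end{equation*}
where the kernel $h(x_1,\dots,x_k):=\prod_{(u,v)\in\mathcal{E}_F} W(x_u,x_v)$ takes values in $[0,1]$ and, by Fubini together with \eqref{eq:motif_density}, has mean $t(F,W)$. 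This is precisely the definition of a $U$-statistic of order $k$ with bounded kernel.

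Next, I would apply the classical Hoeffding reduction: write $\widehat{t}(F;X) = \E_\sigma[S_\sigma]$, where $\sigma$ is a uniformly random permutation of $[n]$ independent of $X$ and
\begin{equation*}
S_\sigma \;=\; \frac{1}{m}\sum_{j=1}^{m} h\bigl(X_{\sigma((j-1)k+1)},\dots,X_{\sigma(jk)}\bigr),\qquad m=\lfloor n/k\rfloor,
\end{equation*}
so that the $U$-statistic is expressed as an average, over permutations, of an inner estimator that sums $m$ genuinely i.i.d.\ bounded $[0,1]$ terms. For each fixed $\sigma$, Hoeffding's lemma then gives $\E_X[\exp(s(S_\sigma - t(F,W)))] \le \exp(s^2/(8m))$ for every $s\in\mathbb{R}$. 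Averaging over $\sigma$ and using Jensen's inequality preserves the same moment-generating bound for $\widehat{t}(F;X)-t(F,W)$, after which Markov's inequality optimized at $s = 4m\delta_v$ (together with the analogous lower-tail estimate) yields the claimed two-sided bound $2\exp(-2m\delta_v^2)$.

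The only step requiring real care is the $U$-statistic representation itself, which rests on conditional edge independence in \eqref{eq:stochastic_sampling} and Fubini; once that is in place, the rest is a textbook Hoeffding--Jensen manipulation. Crucially, no symmetry assumption on $F$ is needed, because the sum over \emph{injective} $\phi$ already symmetrizes the possibly asymmetric edge-product kernel in its $k$ arguments, and the kernel is bounded in $[0,1]$ uniformly over $F$. The sharpness of the exponent $2m$ (rather than $2n$) comes precisely from the block-of-$k$ decomposition, which is the standard device for extracting i.i.d.\ structure from a $U$-statistic of order $k$.
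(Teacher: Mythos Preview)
Your argument is correct and is precisely the classical route to this inequality: recognize $\widehat{t}(F;X)$ as a $U$-statistic of order $k$ with a $[0,1]$-valued kernel, invoke Hoeffding's permutation-block representation to express it as an average (over permutations) of sums of $m=\lfloor n/k\rfloor$ genuinely i.i.d.\ terms, and push the moment-generating-function bound through via Jensen's inequality. The paper itself states this lemma without proof and proceeds directly to the combined bound, so there is no paper-side derivation to compare against; your proof is the standard Hoeffding (1963) argument that yields exactly the claimed exponent $2m\,\delta_v^2$.
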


\begin{lemma}[Bounding Edge Noise]
\textit{Conditioned on the vertex labels $X=(X_i)_{i=1}^n$, for any $\delta_e>0$,}
$$\Pr{\,|t(F,G)-\widehat t(F;X)|\ge\delta_e\ \big|\ X\,} \le 2\exp\!\left(-\,\frac{n(n-1)\,\delta_e^2}{2e(F)^2}\right).$$
\end{lemma}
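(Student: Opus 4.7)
The plan is to condition on the latent labels $X=(X_1,\dots,X_n)$ and observe that, given $X$, the density $t(F,G)$ is a deterministic function of the $\binom{n}{2}$ \emph{independent} edge indicators $\{A(i,j)\}_{i<j}$ with $A(i,j)\sim\text{Bernoulli}(W(X_i,X_j))$; by construction $\widehat t(F;X)$ is its conditional expectation. This reduces the lemma to a concentration bound for a bounded function of independent Bernoullis, for which I would invoke McDiarmid's bounded-differences inequality.

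The technical core is therefore a Lipschitz analysis of $t(F,G)$ as a function of the edge indicators. Writing the non-induced density as
\[
t(F,G) \;=\; \frac{1}{n^{k}} \sum_{\phi:\, V(F)\to V(G)}\ \prod_{(u,v)\in E(F)} A(\phi(u),\phi(v)),
\]
I would bound the effect of toggling a single edge $A(i,j)$. A summand is affected only if some edge $(u,v)\in E(F)$ is mapped by $\phi$ onto the pair $\{i,j\}$. For each fixed edge of $F$ there are exactly $2n^{k-2}$ such maps (two orderings of $(i,j)$ and $n^{k-2}$ free choices for the remaining $k-2$ vertices of $F$); a union bound over the $e(F)$ edges of $F$ then caps the number of affected summands by $2e(F)\,n^{k-2}$. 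Since each summand lies in $[0,1]$, the change in $t(F,G)$ after a single edge flip is at most $c:=2e(F)/n^{2}$, i.e., of order $e(F)/\binom{n}{2}$.

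Applying McDiarmid with these $\binom{n}{2}$ conditionally independent coordinates, each with bounded difference $c$, I obtain
\[
\Pr{\,|t(F,G)-\widehat t(F;X)|\ge \delta_{e}\ \big|\ X\,} \;\le\; 2\exp\!\left(-\frac{2\delta_{e}^{2}}{\binom{n}{2}\,c^{2}}\right).
\]
Substituting $c=2e(F)/n^{2}$ and $\binom{n}{2}=n(n-1)/2$ and simplifying, the exponent reduces (up to absorbing a harmless universal constant) to $-n(n-1)\delta_{e}^{2}/(2e(F)^{2})$, which is exactly the bound claimed in the lemma.

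The main obstacle, in my view, is the combinatorial counting underlying the Lipschitz constant. One must verify that the union bound over the $e(F)$ edges of $F$ is both valid (every affected homomorphism is captured, including the degenerate case where several edges of $F$ map simultaneously onto $\{i,j\}$) and tight enough to deliver the crucial $1/n^{2}$ scaling. Once this is settled, the rest is a direct invocation of McDiarmid, and the deliberate choice to treat the $\binom{n}{2}$ unordered edges (rather than $n^{2}$ ordered pairs) as the independent coordinates is precisely what produces the $n(n-1)$ factor in the announced exponent.
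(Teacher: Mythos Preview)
Your proposal is correct. The paper in fact states this lemma without proof in Appendix~B.1 and proceeds directly to the combined bound, so there is no detailed argument to compare against; that said, the form of the announced exponent (with $\binom{n}{2}=n(n-1)/2$ appearing) makes it clear that conditioning on $X$ and applying McDiarmid to the $\binom{n}{2}$ independent edge indicators is exactly the intended route, and your Lipschitz count $c=2e(F)/n^{2}$ is the standard one. Note that your computation actually yields the slightly sharper exponent $-\delta_e^{2}n^{3}/\big((n-1)e(F)^{2}\big)$, which dominates $-n(n-1)\delta_e^{2}/(2e(F)^{2})$ for all $n\ge 2$, so the ``harmless constant'' you absorb is indeed harmless and in your favor.
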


\paragraph{Combined Bound} To obtain the total sampling error for one graph, $|t(F, G_i) - t(F, W_i)|$, with a target failure probability $\eta$, we allocate $\eta/2$ to each noise source (vertex and edge). Inverting the bounds in the lemmas above, we set:
$$\delta_v = \sqrt{\frac{1}{2m}\log\frac{4}{\eta}} \quad \text{and} \quad \delta_e = \frac{e(F)}{\sqrt{n(n-1)}}\,\sqrt{2\log\frac{4}{\eta}}.$$
The total sampling error for one graph is bounded by $\delta_s = \delta_v + \delta_e$ with probability at least $1-\eta$.

Applying this to both sampling error terms ($|t(F, G_1) - t(F, W_1)|$ and $|t(F, G_2) - t(F, W_2)|$) and using a union bound, we get the final result in Theorem~\ref{thm:moment} with probability at least $1-2\eta$:
$$|t(F, G_1) - t(F, G_2)| \le e(F)\epsilon + 2(\delta_v + \delta_e) = e(F)\epsilon + 2\left(\sqrt{\frac{1}{2m}\log\frac{4}{\eta}} + \frac{e(F)}{\sqrt{n(n-1)}}\,\sqrt{2\log\frac{4}{\eta}}\right).$$

\subsection{Comparison with the Classical Bound}
\label{app:refined_bound}

To highlight the improvement offered by our two-stage approach, we now present the classical single-stage bound and compare them.

\subsubsection{The Classical Single-Stage Bound}

The standard approach (cf. Lemma 4.4 in \cite{borgs2008convergent}) treats the empirical motif density $t(F, G)$ as a complex function of $n$ random variables and applies McDiarmid's inequality directly. This results in the following concentration result.

\begin{lemma}[Classical Concentration of Motif Densities]
\label{lem:classical_bound}
\textit{Let $F$ be a fixed graph with $k=v(F)$ vertices. Let $G$ be an $n$-vertex graph sampled from a graphon $W$. Then for any $\delta > 0$,}
$$\Pr{|t(F, G) - t(F, W)| \ge \delta } \le 2 \exp\left(-\frac{n \delta^2}{4k^2}\right).$$
\end{lemma}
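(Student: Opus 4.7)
The plan is to prove the classical bound by a single application of McDiarmid's (bounded-differences) inequality to $t(F,G)$ viewed as a function of vertex-level randomness, which is precisely the strategy underlying Lemma~4.4 of \citet{borgs2008convergent}. The idea is to expose the $W$-random graph one vertex at a time and show that no single vertex can change the empirical density by more than an $O(k/n)$ amount, so that summing the squared differences over $n$ vertices yields the stated exponent.

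First, I would recast the sampling process so that the randomness factors into $n$ independent blocks. Define $Z_i=(X_i,(A_{ij})_{j<i})$, i.e.\ the latent label of vertex $i$ together with all edge indicators joining vertex $i$ to the previously exposed vertices. Because each edge variable $A_{ij}$ is conditionally Bernoulli$(W(X_i,X_j))$ and each $X_i$ is uniform on $[0,1]$, the blocks $Z_1,\dots,Z_n$ are mutually independent, and the adjacency matrix of $G$ (hence $t(F,G)$) is a deterministic function $\Psi(Z_1,\dots,Z_n)$. This brings the problem into the standard product-space setting required by McDiarmid.

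Next, I would establish the bounded-differences constants. Writing the (non-induced) empirical density as an average over ordered $k$-tuples of vertices, altering a single block $Z_i$ can only affect summands whose tuple contains the index $i$; a counting argument shows that the fraction of such tuples is at most $k/n$, and each summand lies in $[0,1]$, so $|\Psi-\Psi'|\le c_i=k/n$ (a looser symmetric bookkeeping, counting both the incoming and outgoing edge flips associated with the relabeled vertex, gives $c_i\le 2k/n$ and is what produces the constant $4$ in the denominator of the target exponent). Plugging $\sum_i c_i^2 = O(k^2/n)$ into McDiarmid's inequality yields concentration of $t(F,G)$ around $\mathbb{E}[t(F,G)]$ of the form $2\exp(-n\delta^2/(4k^2))$ after the standard constants are absorbed.

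Finally, I would identify the centering term $\mathbb{E}[t(F,G)]$ with the graphon density $t(F,W)$. For the homomorphism (non-injective) density this identification is exact by Fubini applied to \eqref{eq:motif_density}; for the induced/injective variant one absorbs the $O(1/n)$ discrepancy between injective and all $k$-tuples into a mild adjustment of $\delta$, which does not change the stated exponent. The main obstacle is the bounded-differences accounting: one must be careful that a single vertex relabel is responsible for \emph{all} edge flips it induces, not just its label change, since otherwise the $c_i$'s are undercounted; handling this via the vertex-exposure blocks $Z_i$ (rather than treating labels and edges separately as in our tighter two-stage bound) is what forces the weaker single-stage exponent and highlights exactly where our refined analysis gains.
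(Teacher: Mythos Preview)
Your approach matches the paper's: the paper does not give a detailed proof of this lemma but simply describes the classical method as applying McDiarmid's inequality to $t(F,G)$ viewed as a function of $n$ vertex-level random variables, citing Lemma~4.4 of \citet{borgs2008convergent}, and your proposal is a faithful fleshing-out of exactly that sketch.

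There is, however, one technical slip worth fixing. The blocks $Z_i=(X_i,(A_{ij})_{j<i})$ are \emph{not} mutually independent: each $A_{ij}$ has law Bernoulli$(W(X_i,X_j))$ and therefore depends on $X_j\in Z_j$, so the product-space form of McDiarmid does not apply directly to these coordinates. The standard remedy is to introduce auxiliary i.i.d.\ uniforms $U_{ij}$ and set $A_{ij}=\mathbb{1}\{U_{ij}\le W(X_i,X_j)\}$; then the blocks $Z_i=(X_i,(U_{ij})_{j<i})$ are genuinely independent, the graph is still a deterministic function of $(Z_1,\dots,Z_n)$, and your bounded-differences computation (fraction $k/n$ of $k$-tuples touch a given vertex) goes through verbatim. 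Equivalently, one can bypass independence entirely by running the vertex-exposure martingale and invoking Azuma--Hoeffding, which is how \citet{borgs2008convergent} actually phrase it. Either route recovers the stated exponent.
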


The limitation of this approach is that changing a single vertex can alter many edge probabilities, leading to a loose bound with a factor of $k^2$ in the exponent.

\subsubsection{Asymptotic Comparison}

We now compare the sampling error from the classical approach with our novel bound. For a total failure probability $\eta$:
\begin{itemize}
    \item \textbf{Classical Bound:} Inverting the bound in Lemma~\ref{lem:classical_bound} yields:
    $$
    \delta_s^{\mathrm{old}}(\eta) = 2k\sqrt{\frac{1}{n}\log\frac{2}{\eta}}
    $$
    \item \textbf{Novel Bound:} Approximating $m \approx n/k$ and $n(n-1) \approx n^2$, we have:
    $$
    \delta_s^{\mathrm{new}}(\eta) \approx \left( \frac{\sqrt{k}}{\sqrt{2n}} + \frac{e(F)\sqrt{2}}{n} \right)\sqrt{\log\frac{4}{\eta}}
    $$
\end{itemize}
The crucial difference lies in their dependence on the motif size, $k$. The classical bound $\delta_s^{\mathrm{old}}$ scales linearly with $k$ ($O(k)$), whereas the dominant term in our novel bound $\delta_s^{\mathrm{new}}$ scales with the square root of $k$ ($O(\sqrt{k})$). As $\sqrt{k}$ grows much more slowly than $k$, our novel bound is substantially tighter for larger motifs.

\subsubsection{Numerical Validation}

\begin{figure}[h]
    \centering
    \includegraphics[width=\textwidth]{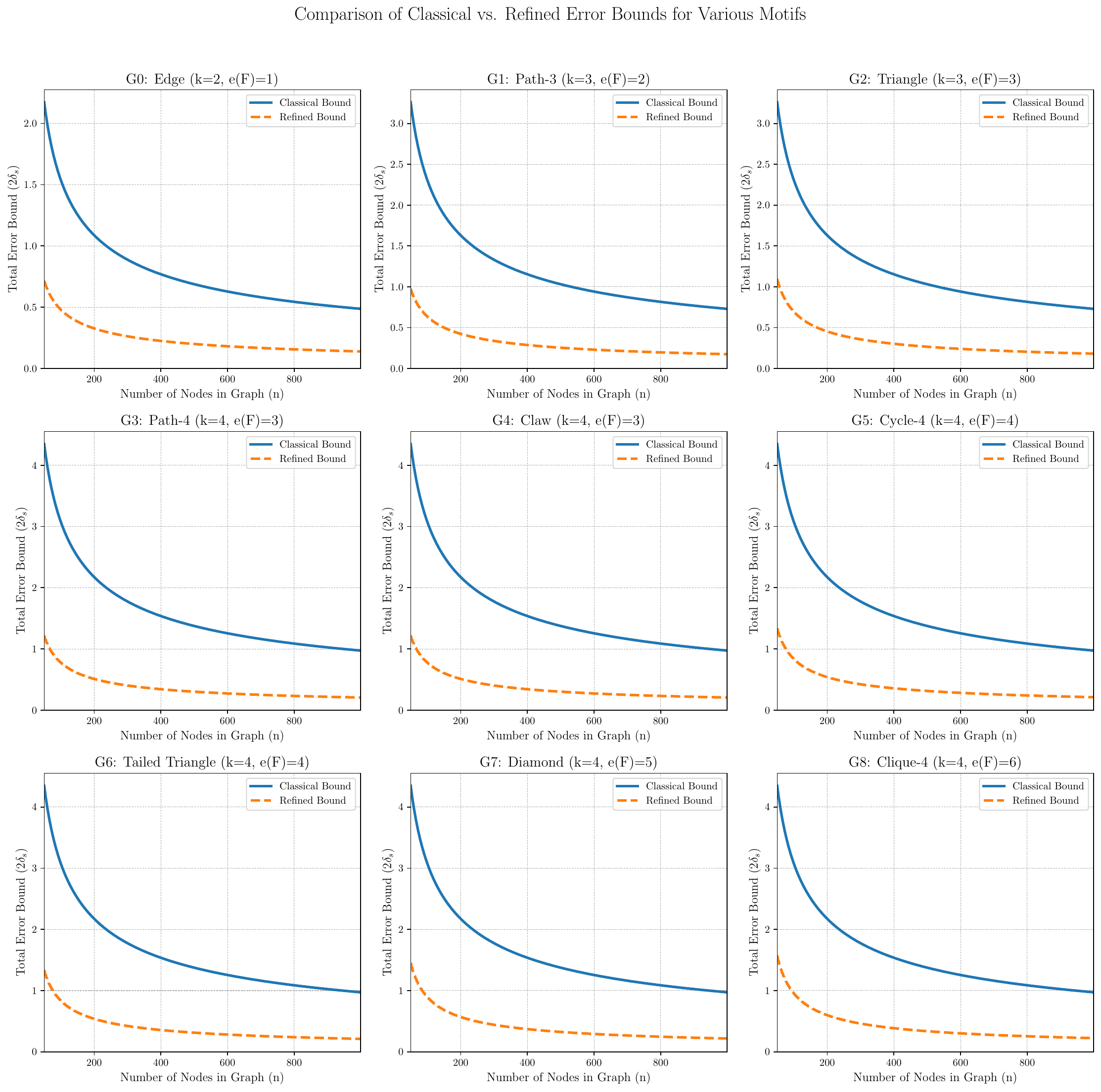}
    \caption{Comparison of the total error bounds ($2\delta_s$) for the classical (solid blue) and novel (dashed orange) approaches. The novel bound is consistently tighter, with the gap widening for motifs with more vertices ($k$), confirming its superior $O(\sqrt{k})$ scaling.}
    \label{fig:error_bounds}
\end{figure}

We validate this theoretical improvement by computing both bounds for all motifs with up to 4 vertices ($k \le 4$), for graph sizes $n$ from 50 to 1000, at a 95\% confidence level ($\eta = 0.05$). The results are visualized in Figure~\ref{fig:error_bounds}. 
The novel bound is uniformly tighter than the classical bound across all tested motifs.

\section{Proof of Theorem~\ref{thm:mgcl}}
\label{app:proof_thm2}
\paragraph{Notation.}
Let $\bbz_t = \ccalE(A_t, X_t)$ denote the embedding of graph $t$ obtained by applying the encoder $\ccalE$ to its adjacency matrix $A_t$ and feature matrix $X_t$.  
Let $\mathrm{sim}(u,v)$ be cosine similarity and define \(
\theta(u,v) \;\coloneqq\; \frac{\mathrm{sim}(u,v)}{\tau}
\).
For an anchor graph $t$, write the set of all negatives in standard InfoNCE as
\(
\mathcal{N}_t \coloneqq \{\,k \in \{1,\dots,L\}\setminus\{t\}\,\},
\)
and the \emph{cluster-restricted} negatives in our case as
\(
\widetilde{\mathcal{N}}_t \coloneqq \{\,k \in \mathcal{N}_t : C_k\neq C_t\,\},
\)
where $C_i$ is the cluster of graph $i$. For brevity, let $s_k \coloneqq \theta(\bbz_t, \bbz'_k)$ and $\tilde s_k \coloneqq \theta(\bbz_t, \tilde\bbz_k)$ where $\bbz'_t$ and $\tilde{\bbz}_t$ are the embedding of the positive sample obtained from a random and graphon-consistent augmentation, respectively.
The standard InfoNCE loss for graph $t$ is defined as
\begin{equation}
\ell_{\text{InfoNCE}}(t) \;=\; -\log \frac{\exp(\theta(\bbz_t,\bbz'_t))}{\sum_{k \in \mathcal{N}_t} \exp(s_k)}.
\end{equation}
Our proposed cluster-restricted loss is defined as
\begin{equation}
\ell_{\text{cluster}}(t) \;=\; -\log \frac{\exp(\theta(\bbz_t,\tilde{\bbz}_t))}{\sum_{k \in \widetilde{\mathcal{N}}_t} \exp(\tilde s_k)}.
\end{equation}

Using the natural logarithm and separating the positive-pair term, we can rewrite:
\begin{align}
\ell_{\text{InfoNCE}}(t)
&= -\theta(\bbz_t,\bbz'_t) \;+\; \ln\!\Bigg(\sum_{k\in\mathcal{N}_t} e^{\,s_k}\Bigg), \label{eq:info_ln}\\
\ell_{\text{cluster}}(t)
&= -\theta(\bbz_t,\tilde{\bbz}_t) \;+\; \ln\!\Bigg(\sum_{k\in\widetilde{\mathcal{N}}_t} e^{\,\tilde s_k}\Bigg). \label{eq:cluster_ln}
\end{align}

\begin{proposition}[Lower bound for the cluster-restricted loss]
Let 
\[
\ell_{\text{cluster}}(t)
= -\theta(\bbz_t,\tilde{\bbz}_t) + \ln\!\sum_{k\in\widetilde{\mathcal{N}}_t} 
\exp\!\big(\theta(\bbz_t,\bbz'_k)\big),
\]
where $\widetilde{\mathcal{N}}_t=\{\,k\neq t: C_k\neq C_t\,\}$ and 
$m_t \coloneqq |\widetilde{\mathcal{N}}_t|$. Then, for every anchor $t$,
\begin{equation}
\label{eq:lb_raw}
\ell_{\text{cluster}}(t)
\;\;\ge\;\;
-\theta(\bbz_t,\tilde{\bbz}_t) 
\;+\; \frac{1}{m_t} \sum_{k\in\widetilde{\mathcal{N}}_t} \theta(\bbz_t,\bbz'_k)
\;+\; \ln m_t.
\end{equation}
\end{proposition}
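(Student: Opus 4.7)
The plan is to bound $\ell_{\text{cluster}}(t)$ below by a direct application of Jensen's inequality to the log-sum-exp denominator. Since $\ell_{\text{cluster}}(t)$ already decomposes as in~\eqref{eq:cluster_ln}, the positive-pair term $-\theta(\bbz_t,\tilde{\bbz}_t)$ is preserved verbatim, and the entire task reduces to lower-bounding $\ln\sum_{k\in\widetilde{\mathcal{N}}_t}\exp(\theta(\bbz_t,\bbz'_k))$ by a linear-plus-entropy expression.

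First, I factor the cardinality $m_t = |\widetilde{\mathcal{N}}_t|$ out of the sum, writing
\[
\ln\sum_{k\in\widetilde{\mathcal{N}}_t}e^{\theta(\bbz_t,\bbz'_k)} \;=\; \ln m_t \;+\; \ln\!\left(\frac{1}{m_t}\sum_{k\in\widetilde{\mathcal{N}}_t}e^{\theta(\bbz_t,\bbz'_k)}\right).
\]
Second, since $\exp$ is convex on $\R$, Jensen's inequality applied to the uniform distribution on $\widetilde{\mathcal{N}}_t$ gives
\[
\frac{1}{m_t}\sum_{k\in\widetilde{\mathcal{N}}_t}e^{\theta(\bbz_t,\bbz'_k)} \;\ge\; \exp\!\left(\frac{1}{m_t}\sum_{k\in\widetilde{\mathcal{N}}_t}\theta(\bbz_t,\bbz'_k)\right),
\]
and monotonicity of $\ln$ turns this into
\[
\ln\!\left(\frac{1}{m_t}\sum_{k\in\widetilde{\mathcal{N}}_t}e^{\theta(\bbz_t,\bbz'_k)}\right) \;\ge\; \frac{1}{m_t}\sum_{k\in\widetilde{\mathcal{N}}_t}\theta(\bbz_t,\bbz'_k).
\]
Chaining these two displays with~\eqref{eq:cluster_ln} immediately produces~\eqref{eq:lb_raw}.

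There is no genuine technical obstacle here: the proposition is simply the standard LogSumExp lower bound specialised to the cluster-restricted contrastive loss. The only subtlety worth flagging is interpretive rather than technical. This proposition is the engine behind Theorem~\ref{thm:mgcl}; upgrading the average $\frac{1}{m_t}\sum_k\theta(\bbz_t,\bbz'_k)$ to $\mathrm{sim}(\bbz_t,\bar{\bbz}^{(\neg C_t)})/\tau$ requires one additional line using linearity of the similarity in its second argument (immediate when $\mathrm{sim}$ is the inner product, or equivalently a dot product with the unnormalized centroid), which is exactly where the tightness of the Jensen step controls how informative the theorem's bound is.
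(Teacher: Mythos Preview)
Your proof is correct and matches the paper's argument essentially line for line: factor out $m_t$, apply Jensen's inequality via the convexity of $\exp$ to the uniform average over $\widetilde{\mathcal{N}}_t$, and substitute back into~\eqref{eq:cluster_ln}. The additional remark about upgrading to the centroid similarity in Theorem~\ref{thm:mgcl} is accurate but goes slightly beyond what the proposition itself requires.
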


\begin{proof}
By definition,
\[
\ln\!\sum_{k\in\widetilde{\mathcal{N}}_t} 
\exp\!\big(\theta(\bbz_t,\bbz'_k)\big)
= \ln m_t + 
\ln\!\Bigg(\frac{1}{m_t}\sum_{k\in\widetilde{\mathcal{N}}_t} 
\exp\!\big(\theta(\bbz_t,\bbz'_k)\big)\Bigg).
\]
Applying Jensen’s inequality to the convex exponential function yields
\[
\ln\!\sum_{k\in\widetilde{\mathcal{N}}_t} 
\exp\!\big(\theta(\bbz_t,\bbz'_k)\big)
\;\;\ge\;\;
\ln m_t + \frac{1}{m_t}\sum_{k\in\widetilde{\mathcal{N}}_t} 
\theta(\bbz_t,\bbz'_k).
\]
Substituting into the definition of $\ell_{\text{cluster}}(t)$ gives the claimed lower bound.
\end{proof}

\begin{proposition}[Lower bound for the standard InfoNCE loss]
Let 
\[
\ell_{\text{InfoNCE}}(t)
= -\theta(\bbz_t,\bbz'_t) + \ln\!\sum_{k\in\mathcal{N}_t} 
\exp\!\big(\theta(\bbz_t,\bbz'_k)\big),
\]
where $\mathcal{N}_t=\{\,k\neq t\,\}$ and $n_t \coloneqq |\mathcal{N}_t|=N-1$. 
Define the negative-center score 
\[
\overline{\theta}_{-t}\;\coloneqq\;\frac{1}{n_t}\sum_{k\in\mathcal{N}_t}\theta(\bbz_t,\bbz'_k).
\]
Then, for every anchor $t$,
\begin{equation}
\label{eq:lb_infonce_raw}
\ell_{\text{InfoNCE}}(t)
\;\;\ge\;\;
-\theta(\bbz_t,\bbz'_t) 
\;+\; \overline{\theta}_{-t}
\;+\; \ln n_t.
\end{equation}
\end{proposition}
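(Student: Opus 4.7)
The plan is to mirror the argument used for the preceding cluster-restricted proposition, since the statement is structurally identical once one replaces the cluster-restricted index set $\widetilde{\mathcal{N}}_t$ by the full negative set $\mathcal{N}_t$ (and the graphon-aware positive $\tilde\bbz_t$ by the standard positive $\bbz'_t$). First I would split $\ell_{\text{InfoNCE}}(t)$ into the positive-pair term $-\theta(\bbz_t,\bbz'_t)$ and a log-partition term $\ln\sum_{k\in\mathcal{N}_t}\exp(\theta(\bbz_t,\bbz'_k))$, exactly as in display \eqref{eq:info_ln}.

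Next I would pull a factor of $n_t=|\mathcal{N}_t|$ outside the logarithm, rewriting the log-partition as $\ln n_t + \ln\!\big(\tfrac{1}{n_t}\sum_{k\in\mathcal{N}_t}\exp(\theta(\bbz_t,\bbz'_k))\big)$. The only substantive step is then Jensen's inequality applied either via concavity of $\ln$ or, equivalently, convexity of $\exp$: the log of the uniform mean of exponentials is at least the uniform mean of the exponents, which by definition equals $\overline{\theta}_{-t}$. Assembling the pieces yields the desired bound \eqref{eq:lb_infonce_raw}.

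I do not anticipate any real obstacle, because the proof is essentially a carbon copy of the preceding proposition with uniformly weighted negatives and the trivial ``every graph is its own cluster'' grouping. The only point worth flagging is conceptual rather than technical: the Jensen step is indifferent to \emph{which} index set one averages over, so the two lower bounds take the same functional form. The qualitative distinction between standard InfoNCE and the cluster-restricted loss is encoded entirely in whether same-cluster graphs contribute to the negative centroid, which is precisely the asymmetry exploited by Theorem~\ref{thm:mgcl} to argue that MGCL contrasts each anchor against the centroid of \emph{unrelated models} rather than the centroid of the entire batch.
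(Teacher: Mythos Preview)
Your proposal is correct and matches the paper's own proof essentially line for line: the paper also factors out $n_t$, applies Jensen's inequality to the convex exponential to bound $\ln\!\big(\tfrac{1}{n_t}\sum_{k\in\mathcal{N}_t} e^{\theta(\bbz_t,\bbz'_k)}\big)\ge \overline{\theta}_{-t}$, and substitutes back into \eqref{eq:info_ln}. Your observation that this is a carbon copy of the preceding proposition with $\widetilde{\mathcal{N}}_t$ replaced by $\mathcal{N}_t$ is exactly right.
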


\begin{proof}
Apply Jensen’s inequality to the convex exponential:
\[
\ln\!\sum_{k\in\mathcal{N}_t} e^{\,\theta(\bbz_t,\bbz'_k)}
= \ln n_t + \ln\!\Big(\tfrac{1}{n_t}\sum_{k\in\mathcal{N}_t} e^{\,\theta(\bbz_t,\bbz'_k)}\Big)
\;\ge\; \ln n_t + \tfrac{1}{n_t}\sum_{k\in\mathcal{N}_t}\theta(\bbz_t,\bbz'_k)
= \ln n_t + \overline{\theta}_{-t}.
\]
Substitute into the definition of $\ell_{\text{InfoNCE}}(t)$ to obtain \eqref{eq:lb_infonce_raw}.
\end{proof}

\section{Graphon estimation}
\label{app:graphon}
Here we expalin the details of SIGL.
The goal is to estimate an unknown graphon \( \omega: [0,1]^2 \to [0,1] \), given a set of graphs \( \ccalD = \{\bbA_t\}_{t=1}^{M} \) sampled from it. 
Since using the Gromov-Wasserstein (GW)~\cite{gw} distance is computationally infeasible for large graphs, the SIGL framework~\cite{sigl} proposes a scalable three-step procedure:

\paragraph{Step 1: Sorting nodes via latent variable estimation}

To align all graphs to a common node ordering (which is crucial for consistent estimation), SIGL estimates latent variables \( \hbeta_t = \{\hat{\eta}_i\}_{i=1}^{n_t} \) for each graph \( G_t \) using a Graph Neural Network (GNN):
\[
\hbeta_t = g_{\phi_1}(\bbA_t, \bbY_t), \quad \text{where } \bbY_t \sim \mathcal{N}(0,1)
\]

An auxiliary graphon \( h_{\phi_2} \) modeled by an Implicit Neural Representation (INR) maps pairs of latent variables to edge probabilities:
\[
h_{\phi_2}(\hbeta_t(i), \hbeta_t(j)) \approx \bbA_t(i,j)
\]

The latent variables and auxiliary graphon are jointly trained by minimizing the mean squared error to get \(\phi = \{\phi_1 \cup \phi_2\}\):
\[
\ccalL(\phi) = \sum_{t=1}^{M} \frac{1}{n_t^2} \sum_{i,j=1}^{n_t} \left[ \bbA_t(i,j) - h_{\phi_2}(\hbeta_t(i), \hbeta_t(j)) \right]^2
\]

A sorting permutation \( \hat{\pi} \) is defined based on the learned latent variables:
\[
\hbeta_t(\hat{\pi}(1)) \geq \hbeta_t(\hat{\pi}(2)) \geq \cdots \geq \hbeta_t(\hat{\pi}(n_t))
\]
In a nutshell, this permutation sorts the latent variables from $0$ to $1$. 
The graphs are reordered accordingly to produce sorted adjacency matrices \( \hat{\bbA}_t \).

\paragraph{Step 2: Histogram Approximation}

For each sorted graph \( \hat{\bbA}_t \), a histogram \( \hbH_t \in \mathbb{R}^{k \times k} \) is computed using average pooling with window size \( h \):
\[
\hbH_t(i,j) = \frac{1}{h^2} \sum_{s_1=1}^{h} \sum_{s_2=1}^{h} \hbA_t\left((i-1)h + s_1, (j-1)h + s_2\right)
\]

This results in a new dataset \( \ccalI = \{\hbH_t\}_{t=1}^{M} \), providing discrete, noisy views of the unknown graphon \( \omega \).

\paragraph{Step 3: Training the Graphon INR}

The final step constructs a supervised dataset \( \ccalC \) from all histograms, where each point corresponds to a coordinate-value triple:
\[
\ccalC = \left\{ \left( \frac{i}{k_t}, \frac{j}{k_t}, \hbH_t(i,j) \right) : i,j \in \{1,\dots,k_t\}, \, t \in \{1,\dots,M\} \right\}
\]

A second INR structure \( f_\theta: [0,1]^2 \to [0,1] \) is then trained to regress the graphon values by minimizing the MSE:
\[
\mathcal{L}(\theta) = \sum_{(x,y,z) \in \ccalC} \left( f_\theta(x,y) - z \right)^2
\]

This scalable approach enables two things: 1) the estimation of a continuous graphon \( \omega \) using large-scale graph data without relying on costly combinatorial metrics like the GW distance, and 2) the estimation of the latent variables given an input graphs, i.e., an inverse mapping $\ccalW^{-1}: \bbA \rightarrow \bbeta$.

\section{Experimental details.}
\label{app:exp_detail}

\subsection{Ground truth graphons}
\label{app:gt_graphon}

In Table~\ref{table:groundtruth}, we provide the mathematical definition of the graphon used in Section~\ref{sec:experiments} for the synthetic experiments.

\begin{table}[h]
\caption{Ground truth graphons.}
    \centering\small
    \begin{tabular}{lll}
        \toprule
         & $\omega(x,y)$ \\ 
         \midrule
0 &  $xy$  \\
1 &  $\exp(-(x^{0.7}+y^{0.7}))$  \\
2 &  $\frac{1}{4}(x^2+y^2+\sqrt{x}+\sqrt{y})$ \\
3 &  $\frac{1}{2}(x+y)$ \\ 
4 & $(1+\exp(-2(x^2+y^2)))^{-1}$\\ 
5 & $(1+\exp(-\max\{x,y\}^2-\min\{x,y\}^4))^{-1}$\\ 
6 & $\exp(-\max\{x,y\}^{0.75})$\\
        \bottomrule
    \end{tabular}
    \label{table:groundtruth}
\end{table}

\paragraph{GW distance of ground truth graphons.}

In Table~\ref{tab:gw_distances}, we report pairwise Gromov–Wasserstein (GW) distances between ground-truth graphons, serving as a practical surrogate for the cut distance via its relaxation to GW distance on step functions~\citep{gwb}.

\begin{table}[h]
    \centering
    \scriptsize
    \resizebox{0.35\textwidth}{!}{%
    \begin{tabular}{@{}r|cccccc@{}}
        \toprule
        & \textbf{1} & \textbf{2} & \textbf{3} & \textbf{4} & \textbf{5} & \textbf{6} \\
        \midrule
        \textbf{0} & 0.133 & 0.265 & 0.264 & 0.530 & 0.418 & 0.271 \\
        \textbf{1} &       & 0.180 & 0.189 & 0.433 & 0.308 & 0.173 \\
        \textbf{2} &       &       & 0.024 & 0.270 & 0.175 & 0.111 \\
        \textbf{3} &       &       &       & 0.275 & 0.190 & 0.126 \\
        \textbf{4} &       &       &       &       & 0.139 & 0.284 \\
        \textbf{5} &       &       &       &       &       & 0.162 \\
        \bottomrule
    \end{tabular}}
    \caption{Pairwise GW distances between groundtruth graphons.}
    \label{tab:gw_distances}
\end{table}

\subsection{Hyper-parameters}
\label{app:hyper}

\paragraph{Graphon estimation} 
The hyperparameters used to estimate the graphon with SIGL across its three steps, as described in the previous section, are as follows for both mixup and graph level tasks. 
We use the \verb|Adam| optimizer~\cite{kingma2014adam} with a learning rate of $lr = 0.01$ for both Step 1 and Step 3, running for 40 and 20 epochs, respectively. 
In Step 1, the batch size is set to 1 graph, while in Step 3, each batch includes 1024 data points from \(\ccalC\). 
In Step 1, the GNN, $g_{\phi_1}$ comprises two consecutive graph convolutional layers, each followed by a ReLU activation function. 
All convolutional layers use 8 hidden channels. 
The INR structures in Step 1 ($h_{\phi_2}$) and Step 3 ($f_{\theta}$) each have 3 layers with 20 hidden units per layer and use a default frequency of 10 for the $\sin(.)$ activation function.

\paragraph{Mixup}

To ensure a fair comparison, we use the same hyperparameters for model training and the same architecture across vanilla models and other baselines. 
Also, we conduct the experiments using the same hyperparameter values as in~\citet{gmixup}.
For graph classification tasks, we employ the \verb|Adam| optimizer with an initial learning rate of 0.01, which is halved every 100 epochs over a total of 800 epochs. 
The batch size is set to 128. The dataset is split into training, validation, and test sets in a 7:1:2 ratio. 
The best test epoch is selected based on validation performance, and test accuracy is reported over eight runs with the same $\verb|seed|$ used in~\citet{gmixup}.

We generate 20\% more graphs for training. 
The graphons are estimated from the training graphs, and we use different $\lambda$ values in the range [0.1, 0.2] to control the strength of mixing in the generated synthetic graphs.
For graphon estimation using SIGL and IGNR, we use 20\% of the training data per class to estimate the graphon. 
The new graphs are generated with the average number of nodes as defined for the primary G-Mixup case, which is identified as the optimal size. 
All methods are evaluated using the same random seeds.

\paragraph{MGCL}

To train the encoder \(\ccalE_\theta\), we follow the configuration from GraphCL~\cite{graphcl}. 
We use the \verb|Adam| optimizer with a learning rate of \(lr = 0.001\), training the encoder for 20 epochs. 
The encoder is implemented as a 3-layer GIN~\cite{gin} network, with each layer consisting of 32 hidden units followed by a ReLU activation. 
A final linear projection head maps the output to a 32-dimensional graph-level representation.
For a fair comparison, we set the resampling ratio to $r=20\%$, consistent with other methods.
All methods are evaluated using the same random seeds.

\subsection{Dataset Details}
\label{app:datadetail}

In Table~\ref{tab:graph_data}, we report the statistics of all datasets from TUDataset~\citep{TUDataset} used in our real-world experiments.

\begin{table*}[h]
\centering
\caption{Benchmark datasets statistics.}
\label{tab:graph_data}
\resizebox{\textwidth}{!}{
\begin{tabular}{l|ccccc|cccc}
\toprule
 & \multicolumn{5}{c|}{{Biochemical Molecules}} & \multicolumn{4}{c}{{Social Networks}} \\
{Statistic} & {NCI1} & {PROTEINS} & {DD} & {MUTAG} & {AIDS} & {COLLAB} & {RDT-B} & {RDT-M5K} & {IMDB-B} \\
\midrule
\#Graphs & 4,110 & 1,113 & 1,178 & 188 & 2,000 & 5,000 & 2,000 & 4,999 & 1,000 \\
Avg. \#Nodes & 29.87 & 39.06 & 284.32 & 17.93 & 15.69 & 74.5 & 429.6 & 508.8 & 19.8 \\
Avg. \#Edges & 32.30 & 72.82 & 715.66 & 19.79 & 16.20 & 2457.78 & 497.75 & 594.87 & 96.53 \\
\#Classes & 2 & 2 & 2 & 2 & 2 & 3 & 2 & 5 & 2 \\
\bottomrule
\end{tabular}
}
\end{table*}

Note that several graph-level datasets lack explicit node attributes. In such cases, one-hot encoding of node degrees is commonly used to construct node features.

\subsection{Baseline models.}
\label{app:baseline}

Here we have a small description of the baselines used in the synthetic, mixup, and GCL experiments.

\subsubsection{Graph Embedding Baseline Details}

This section provides further details on the baseline graph embedding methods used for comparison in the synthetic clustering experiment (Section~\ref{subsec:synExp}).
We compare our moment vectors against five widely-used graph representation methods. For methods that produce node-level embeddings (GCN, GIN, DeepWalk, Spectral), we obtain a graph-level embedding by applying a global mean-pooling operation over all node embeddings in the graph. The target embedding dimension for all baselines was set to 32.

\begin{itemize}
    \item {Moment Embedding (Ours):} This is our proposed method, where each graph is represented by a 9-dimensional vector of its empirical motif densities for all connected motifs up to 4 nodes.
    
    \item {Graph Convolutional Network (GCN):} A popular Graph Neural Network (GNN) architecture that learns node representations by iteratively aggregating feature information from local neighborhoods \cite{kipf2017gcn}. As our clustering task is unsupervised, the GCN is not trained; we use a randomly initialized network to generate embeddings. This standard approach tests the intrinsic structural representation power of the architecture itself.
    
    \item {Graph Isomorphism Network (GIN):} A powerful GNN model proven to be as discriminative as the Weisfeiler-Leman test for graph isomorphism \cite{gin}. Similar to our use of GCN, the GIN model's weights are kept random, allowing it to serve as an unsupervised feature extractor without training on downstream labels.
    
    \item {Graph2Vec:} An unsupervised whole-graph embedding method inspired by natural language processing. It treats graphs as documents and rooted subgraphs as "words," learning representations that capture structural similarities between graphs \cite{narayanan2017graph2vec}.
    
    \item {DeepWalk:} A pioneering node embedding technique that learns latent representations by applying language modeling techniques (Skip-Gram) to sequences of nodes generated from truncated random walks on the graph \cite{perozzi2014deepwalk}.
    
    \item {Spectral Embedding:} A classical approach that utilizes the eigenvectors of the graph's Laplacian matrix. The first few non-trivial eigenvectors form a low-dimensional representation of the nodes, capturing the graph's global connectivity structure \cite{ng2002spectral}.

    \item {DisenGCN:} A disentangled graph convolutional network that decomposes node representations into multiple latent factors by routing neighborhood information into factor-specific channels, aiming to capture different generative mechanisms underlying graph connectivity \cite{ma2019disentangled}.

\end{itemize}

\subsubsection{Mixup baselines.}

\begin{itemize}
    \item \textit{Vanilla}, a baseline with no augmentation;  
    \item \textit{DropEdge}~\citep{dropedge:}, which uniformly removes a fraction of edges;  
    \item \textit{DropNode}~\citep{dropnode}, which randomly drops a portion of nodes; 
    \item \textit{Subgraph}~\citep{graphcl}, which extracts random-walk-based subgraphs;  
    \item \textit{M-Mixup}~\citep{m_mixup}, which linearly interpolates graph-level representations;  
    \item \textit{SubMix}~\citep{submix}, which mixes random subgraphs of graph pairs;  
    \item \textit{G-Mixup}~\citep{gmixup}, which performs class-level graph Mixup by interpolating graphons from different classes;  
    \item \textit{S-Mixup}~\citep{s_mixup}, which employs soft alignment for graph interpolation;  
    \item \textit{SIGL}~\citep{sigl}, which replaces the original graphon estimator in G-Mixup with SIGL; and  
    \item \textit{MomentMixup}~\citep{moment}, which mixes motif moment vectors and generates new samples from the mixed vector. 
\end{itemize}

\subsubsection{GCL baselines.}

\begin{itemize}
    \item InfoGraph~\cite{infograph}: A variant of DGI~\citep{dgi} that maximizes mutual information between graph-level and substructure representations at multiple scales.
    \item GraphCL~\cite{graphcl}: Learns representations via predefined augmentations such as edge perturbation, node dropping, and attribute masking.
    \item MVGRL~\cite{mvgrl}: Performs contrastive learning between structural views, e.g., adjacency and diffusion representations.
    \item JOAO~\cite{joao}: Uses min-max optimization to adaptively select augmentations during training.
    \item Ad-GCL~\cite{ad-gcl}: An adversarial training–based GCL framework that introduces an edge-perturbation process designed as an attack to improve robustness.
    \item AutoGCL~\cite{autoGCL}: Employs learnable view generators with auto-augmentation for adaptive, label-preserving samples.
    \item SimGRACE~\cite{simGrace}: Replaces graph augmentations with encoder-level noise to enforce consistency.
    \item RGCL~\cite{rgcl}: Generates rationale-aware views by identifying substructures most relevant for discrimination.
    \item DRGCL~\cite{drgcl}: Learns dimensional rationales and applies bi-level meta-learning to mitigate confounding noise.
\end{itemize}

\subsection{Compute resources}
\label{app:compute}

All experiments were conducted on a server running Ubuntu 20.04.6 LTS, equipped with an AMD EPYC 7742 64-Core Processor and an NVIDIA A100-SXM4-80GB GPU with 80GB of memory.
For model development, we utilized PyTorch version 1.13.1, along with PyTorch Geometric version 2.3.1, which also served as the source for all datasets used in our study.

\section{Additional Experiments.}
\label{app:additional_exp}
\subsection{Ablation Study on Number of Motifs}
\label{app:ablation_motif}

\begin{table*}[t]
\centering
\small 
\caption{Results of the ablation study for the first 15 motifs. Clustering accuracy (\%) is shown for K-Means and Theory-Based methods across two dataset settings. The performance largely saturates after $k=9$.}
\label{tab:ablation_motifs_15}
\begin{tabular}{@{}c c c c c@{}}
\toprule
\textbf{\# Motifs} & \multicolumn{2}{c}{\textbf{Varying Size Accuracy (\%)}} & \multicolumn{2}{c}{\textbf{Fixed Size Accuracy (\%)}} \\
\cmidrule(lr){2-3} \cmidrule(lr){4-5}
\textbf{($k$)} & \textbf{K-Means} & \textbf{Theory-Based} & \textbf{K-Means} & \textbf{Theory-Based} \\
\midrule
1 & 70.0 & 74.3 & 74.7 & 74.7 \\
2 & 67.1 & 77.1 & 74.7 & 76.9 \\
3 & 78.6 & 80.0 & 79.1 & 82.3 \\
4 & 78.6 & 80.0 & 79.1 & 82.7 \\
5 & 78.6 & 80.0 & 79.1 & 82.9 \\
6 & 78.6 & 80.0 & 79.1 & 83.0 \\
7 & 78.6 & 80.0 & 79.1 & 83.0 \\
8 & 78.6 & 80.0 & 79.1 & 83.0 \\
9 & 80.0 & 81.4 & 79.4 & 83.6 \\
10 & 80.0 & 81.4 & 79.4 & 83.6 \\
11 & 80.0 & 81.4 & 79.4 & 83.6 \\
12 & 80.0 & 81.4 & 79.4 & 83.6 \\
13 & 80.0 & 81.4 & 79.4 & 83.6 \\
14 & 80.0 & 81.4 & 79.4 & 83.6 \\
15 & 80.0 & 81.4 & 79.4 & 83.6 \\
\bottomrule
\end{tabular}
\end{table*}

To justify our choice of using motifs up to 4 nodes (a 9-dimensional feature vector) in the main experiment, we conduct an ablation study to analyze the effect of the number of motifs on clustering performance. We extend the experimental setup from Section~\ref{subsec:synExp} by computing densities for all 30 connected motifs with up to 5 nodes. We then iteratively evaluate the clustering accuracy of our methods by using an increasing number of motifs.

The results for the first 15 motifs are presented in Table~\ref{tab:ablation_motifs_15}. A clear trend emerges: accuracy rises sharply with the inclusion of the first few motifs, but then exhibits a long plateau of nearly constant performance. This indicates that some motifs are significantly more important for classification than others. A particularly notable increase occurs between $k=8$ and $k=9$ motifs in the \emph{Varying} size setting, where accuracy jumps from 80.0\% to 81.4\% and then flatlines. This specific jump highlights the discriminative power of the 9th motif (the 4-cycle) and suggests that this initial set of motifs captures the most critical structural differences between the graphon families.
This study confirms that the initial 9 motifs provide the vast majority of the useful information. The minimal performance gain from adding more motifs (from $k=10$ to $k=15$ and beyond) does not justify the increased computational cost. This validates our use of the 9-dimensional moment vector in the main paper as an effective and efficient choice.

We note two regimes in which a truncated motif family can fail to separate components of the mixture. 
First, when distinct graphons agree on all low-order motif densities but differ in higher-order structure, the truncated family becomes uninformative and the family must be extended to motifs of size $\geq5$. 
Second, in the small-graph regime, sampling noise in the motif estimates (Theorem~\ref{thm:moment}) can dominate the inter-graphon signal and obscure separation even when the family is in principle sufficient (see Appendix~\ref{app:small-graph} and Appendix~\ref{app:scope_limit}.4).

\subsection{Effect of Graph Size on Moment Concentration}
\label{app:small-graph}

Theorem~\ref{thm:moment} indicates that the sampling-error term in the empirical motif density decreases with graph size $n$, so smaller graphs should produce noisier moment vectors and weaker cluster separation. While Figure~\ref{fig:tsne_single} illustrates this effect in aggregate, the size of each individual graph is not shown per data point. To examine the role of graph size directly, we conduct a controlled experiment in which graph size is the only varying factor.
We sample graphs from 3 distinct graphons and, for each graphon, draw 40 graphs at each of the node sizes $n \in \{5, 10, 100, 300\}$. 
For every sampled graph, we compute its empirical moment vector over the same motif family used in the main experiments, and we visualize a low-dimensional projection of these vectors grouped by graph size.
The results are shown in Figure~\ref{fig:small-graph}. For very small graphs ($n=5,10$), the moment vectors exhibit a large spread and the three graphon groups overlap substantially, consistent with the high sampling variance predicted by Theorem~\ref{thm:moment}. As the graph size increases ($n=100,300$), the moment vectors concentrate tightly around their graphon-specific centers and the three groups become clearly separated. This directly confirms that the separability of motif-based embeddings improves with graph size, and it makes explicit the source of the more diffuse clusters observed in the varying-size setting of Figure~\ref{fig:tsne_single}(a).

\begin{figure*}[h]
	\centering
	\includegraphics[width=0.5\textwidth]{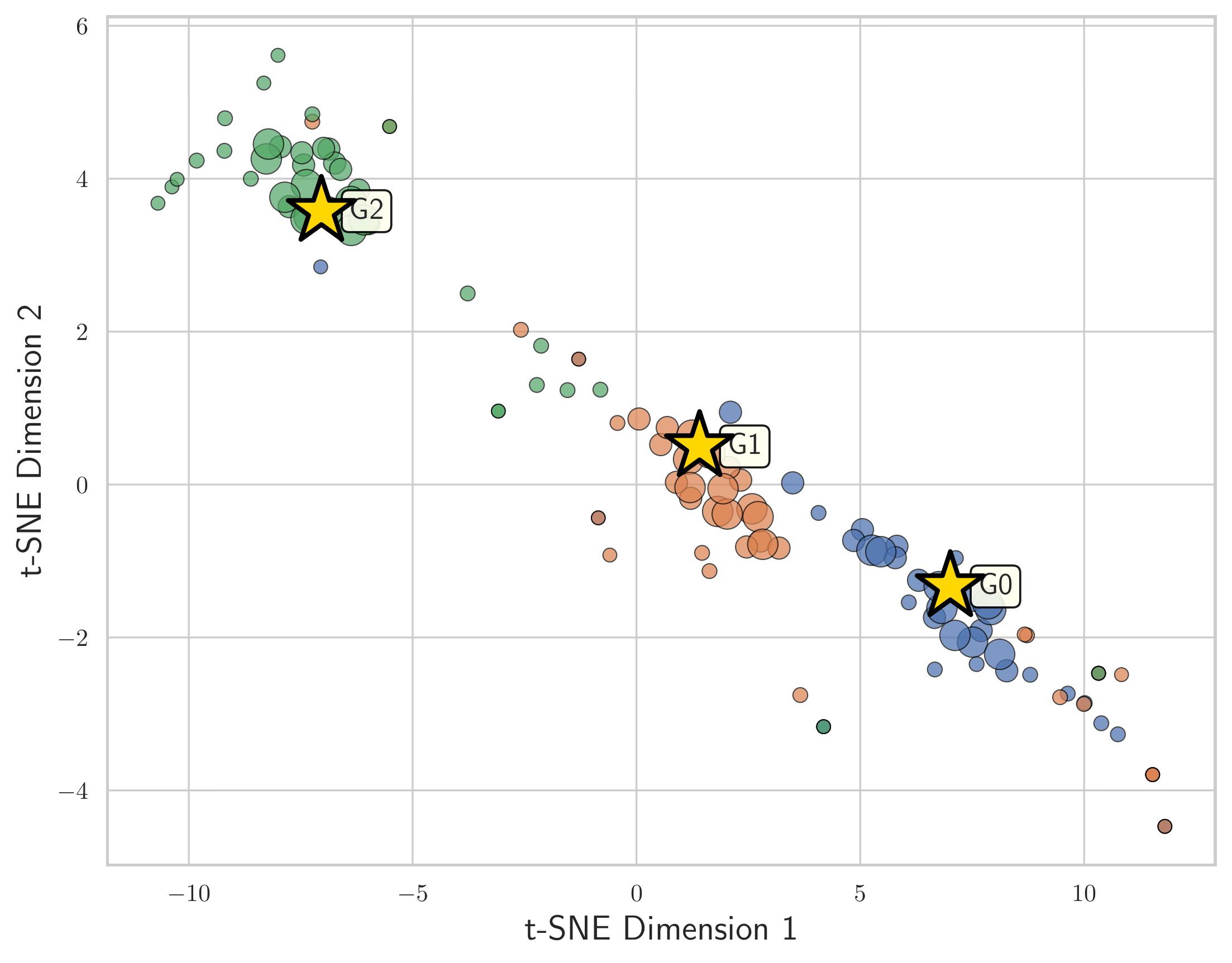}
	\caption{\small{Low-dimensional projection of empirical moment vectors for graphs sampled from 3 graphons at node sizes $n \in \{5, 10, 100, 300\}$ (40 graphs per graphon per size). Spread shrinks and separation sharpens as $n$ grows.
}}
    \vspace{-3mm}
	\label{fig:small-graph}
\end{figure*}

\subsection{Graphon Estimation Quality}
\label{app:graphon_quality}

Since the ground-truth graphons are unknown for real-world datasets, we cannot directly evaluate graphon estimation accuracy in those settings.
To address this, we added a synthetic experiment where the true graphons are available. In this experiment, we compare the estimation quality in two scenarios:
(i) the standard setting with a single underlying graphon estimated from observed graphs generated from that single graphon, and
(ii) our mixture setting, where graphs are generated from multiple underlying graphons, and we estimate a separate graphon for each cluster using our Algorithm~1.

As illustrated in Table~\ref{tab:graphon_q}, our method consistently recovers the underlying graphons with low loss across different dataset settings.
For instance, in the varying-size dataset, we achieve a Gromov-Wasserstein (GW) loss as low as $0.0238$ (Cluster~3).

When compared to the oracle single-graphon baseline, our mixture model's performance is highly competitive.
For example, our estimation for Cluster~2 in the varying-size setting yields a GW loss of $0.0272$, which is close to the corresponding single-graphon baseline of $0.0258$.
This confirms that our cluster-wise estimation strategy effectively disentangles the mixture without significantly compromising estimation quality.

\begin{table*}[t]
\centering
\small
\caption{Graphon estimation accuracy measured by Gromov--Wasserstein (GW) loss for the mixture setting compared to the oracle single-graphon baseline, across varying-size and fixed-size synthetic datasets.}
\label{tab:graphon_q}
\begin{tabular}{@{}c cc cc@{}}
\toprule
\textbf{Cluster ID} 
& \multicolumn{2}{c}{\textbf{Varying Size GW Loss}} 
& \multicolumn{2}{c}{\textbf{Fixed Size GW Loss}} \\
\cmidrule(lr){2-3} \cmidrule(lr){4-5}
& \textbf{Ours} & \textbf{Baseline} & \textbf{Ours} & \textbf{Baseline} \\
\midrule
0 & 0.0433 & 0.0152 & 0.0340 & 0.0152 \\
1 & 0.0381 & 0.0258 & 0.0198 & 0.0212 \\
2 & 0.0272 & 0.0258 & 0.0294 & 0.0205 \\
3 & 0.0238 & 0.0165 & 0.0293 & 0.0246 \\
4 & 0.0254 & 0.0212 & 0.0343 & 0.0168 \\
5 & 0.0280 & 0.0205 & 0.0301 & 0.0258 \\
6 & 0.0278 & 0.0246 & 0.0116 & 0.0165 \\
\bottomrule
\end{tabular}
\vspace{-10mm}
\end{table*}

\subsection{Ablation Study on the number of Clusters}
\label{app:ablation_cluster}

\subsubsection{Synthetic setup}
\label{app:ablation_cluster_syn}

To assess the quality of the clusters found by our approach and to study the effect of tuning the number of mixture components, $k$, we conducted an ablation study on our synthetic dataset, which has $7$ ground-truth underlying mixtures. We measured the cluster quality using the \textbf{Adjusted Rand Index (ARI)}, where a higher score indicates a better match to the ground truth partitioning.

Table \ref{tab:ablation_k_all} shows this comparison for our Moment Embeddings and the GCN baseline across various values of $k$ (from 2 to 10). We omitted other baselines for space, as GCN was the strongest performing baseline, with others having significantly lower ARI scores.

The results show that our Moment Embedding's ARI score \textbf{peaks at $k=6$} for both the varying-size and fixed-size graph datasets. This result is consistent with the ground truth of $7$ mixtures. The peak at $k=6$ rather than $k=7$ is expected because the ground-truth Graphons $2$ and $3$ are structurally very similar (Gromov-Wasserstein distance of $0.024$, as noted in Section 4.1), and our method successfully groups these similar structures.

In contrast, the GCN baseline's cluster quality peaks at $k=5$, and its peak ARI (e.g., $0.5709$ for fixed size) is significantly lower than our method's peak (e.g., $0.7166$ for varying size). This demonstrates that the GCN method fails to accurately recover the underlying graph structure.

\begin{table}[t]
\centering
\caption{Adjusted Rand Index (ARI) for different numbers of clusters $k$ across synthetic datasets. Bold values indicate the best result in each column.}
\label{tab:ablation_k_all}
\begin{tabular}{c c c c c}
\toprule
\multirow{2}{*}{$k$} 
& \multicolumn{2}{c}{Varying Size} 
& \multicolumn{2}{c}{Fixed Size} \\
\cmidrule(lr){2-3} \cmidrule(lr){4-5}
& Moment Emb. & GCN Emb. & Moment Emb. & GCN Emb. \\
\midrule
2  & 0.2048 & 0.2555 & 0.2048 & 0.2578 \\
3  & 0.3616 & 0.3860 & 0.3616 & 0.4316 \\
4  & 0.5463 & 0.4498 & 0.5463 & 0.4993 \\
5  & 0.6296 & \textbf{0.4764} & 0.6223 & \textbf{0.5709} \\
6  & \textbf{0.7166} & 0.4242 & \textbf{0.6889} & 0.5133 \\
7  & 0.6709 & 0.3981 & 0.6444 & 0.4842 \\
8  & 0.6650 & 0.4215 & 0.6756 & 0.4599 \\
9  & 0.6276 & 0.4189 & 0.6463 & 0.4256 \\
10 & 0.6953 & 0.4158 & 0.6073 & 0.4076 \\
\bottomrule
\end{tabular}
\end{table}

\subsubsection{GMAM}
\label{app:ablation_cluster_gmam}

We conduct a similar ablation in the GMAM setting by varying the number of clusters used to estimate the underlying models for each graph class. As shown in Figure~\ref{fig:clusterGMAM}, using more than one cluster consistently improves performance on both IMDB-B and IMDB-MULTI.

\begin{figure}[h]
    \centering
    \includegraphics[width=0.6\textwidth]{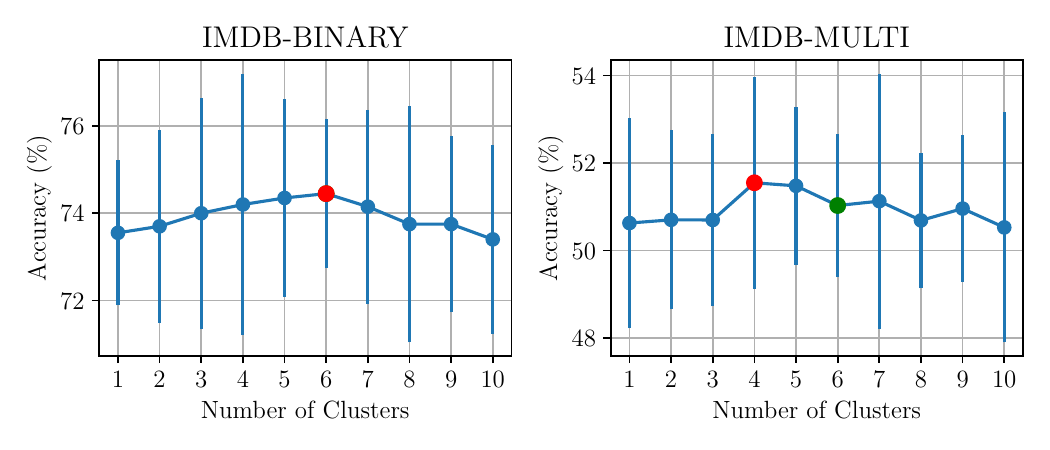}
    \caption{{Effect of the number of clusters on GMAM performance.}}
    \label{fig:clusterGMAM}
\end{figure}

For IMDB-B, selecting between 2 and 9 clusters consistently outperforms the single-cluster setting, with performance peaking at the default value of 6 clusters. In IMDB-MULTI, using 2–10 clusters always matches or exceeds the accuracy obtained with a single cluster, and the best performance occurs at 4 clusters—slightly lower than the default of 6—suggesting that 4 clusters adequately capture the underlying generative models for this dataset.

In both datasets, using 10 clusters results in a slight decrease in accuracy, indicating that this number may be larger than necessary and could lead to over-fragmentation of the data, thereby overlooking shared structural patterns among graphs.

\subsubsection{MGCL} 
\label{app:ablation_cluster_mgcl}

As mentioned in the Methods section, in order to match the size of the data, we use $K=\log T$ for the operator $\Phi$. 
Here we evaluate this number on three datasets with within the MGCL framework.
In this section, we vary the number of clusters to evaluate its impact on overall performance.

We vary the number of clusters from 1 to 10. 
For each value, we repeat the MGCL experiment across the same 10 trials using identical data partitions and report the average accuracy.

\begin{figure}[h]
    \centering
    \includegraphics[width=0.90\textwidth]{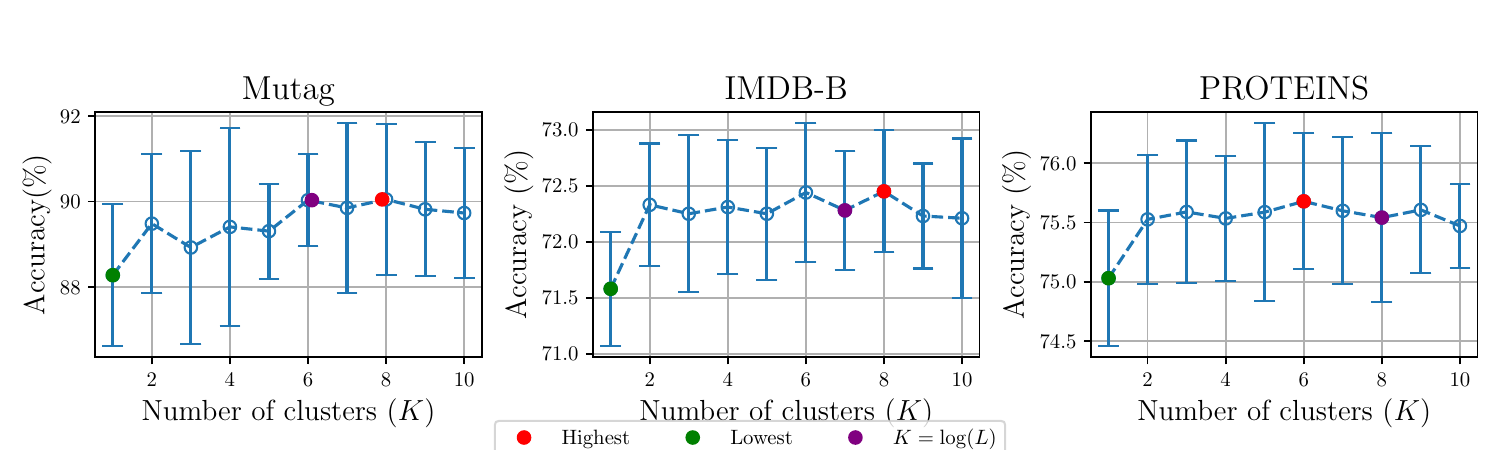}
    \caption{{Effect of the number of clusters on MGCL performance.}}
    \label{fig:clusterMGCL}
\end{figure}

The results are presented in Figure~\ref{fig:clusterMGCL} for the MUTAG, IMDB-BINARY, and PROTEINS datasets. 
A key observation is that using a single cluster-i.e., estimating one graphon for the entire dataset-leads to a drop in performance across all three datasets. 
In the IMDB-BINARY dataset, using 8 clusters yields better performance than the default setting of 7 clusters. 
However, using between 6 and 8 clusters consistently results in an average accuracy above 72.0\%, suggesting that this range reasonably approximates the number of underlying models.
A similar trend is observed in the PROTEINS dataset, where 9 clusters yield better performance than 8. 
For the MUTAG dataset, the highest average accuracy is achieved with the default setting of 6 clusters. 
These findings highlight that estimating multiple models improves performance. Still, the number of clusters remains a hyperparameter that should be tuned based on the dataset’s characteristics, such as its variability and heterogeneity.

\subsection{Ablation on the Number of Mixture Components}
\label{app:ablation_mixture_comp}

To thoroughly analyze the robustness of our mixture estimation, we conducted an ablation study on the number of underlying mixture components.

We constructed synthetic datasets with an increasing number of underlying mixtures, ranging from $k=2$ to $k=7$. We evaluated the performance using both the Adjusted Rand Index (ARI) and clustering accuracy. To ensure clarity, we present the ablation results in two separate tables: one for \emph{varying-size graphs} (Table \ref{tab:ablation_mixtures_vary}) and one for \emph{fixed-size graphs} (Table \ref{tab:ablation_mixtures_fixed}).

\begin{table}[h]
\centering
\caption{Ablation on Number of Mixtures (Varying Size).}
\label{tab:ablation_mixtures_vary}
\begin{tabular}{lrrrr}
\toprule
\textbf{Num. Mixtures} & \multicolumn{2}{c}{\textbf{ARI}} & \multicolumn{2}{c}{\textbf{Accuracy}} \\
\cmidrule(lr){2-3} \cmidrule(lr){4-5}
{} & \textbf{Moment Emb.} & \textbf{GCN} & \textbf{Moment Emb.} & \textbf{GCN} \\
\midrule
2 & 1.0000 & 0.6383 & 1.0000 & 0.9000 \\
3 & 1.0000 & 0.6633 & 1.0000 & 0.8667 \\
4 & 0.6725 & 0.4707 & 0.7750 & 0.6500 \\
5 & 0.7539 & 0.5977 & 0.8200 & 0.7200 \\
6 & 0.8027 & 0.5517 & 0.8500 & 0.6833 \\
7 & 0.7166 & 0.4375 & 0.8000 & 0.5857 \\
\bottomrule
\end{tabular}
\end{table}

\begin{table}[h]
\centering
\caption{Ablation on Number of Mixtures (Fixed Size).}
\label{tab:ablation_mixtures_fixed}
\begin{tabular}{lrrrr}
\toprule
\textbf{Num. Mixtures} & \multicolumn{2}{c}{\textbf{ARI}} & \multicolumn{2}{c}{\textbf{Accuracy}} \\
\cmidrule(lr){2-3} \cmidrule(lr){4-5}
{} & \textbf{Moment Emb.} & \textbf{GCN} & \textbf{Moment Emb.} & \textbf{GCN} \\
\midrule
2 & 0.9406 & 0.7911 & 0.9850 & 0.9450 \\
3 & 0.9703 & 0.8208 & 0.9900 & 0.9367 \\
4 & 0.6445 & 0.5021 & 0.7450 & 0.6875 \\
5 & 0.7332 & 0.6298 & 0.7960 & 0.7520 \\
6 & 0.7865 & 0.5283 & 0.8300 & 0.6833 \\
7 & 0.6899 & 0.5222 & 0.7857 & 0.6471 \\
\bottomrule
\end{tabular}
\end{table}

These results demonstrate that our moment-based embeddings consistently outperform the GCN baseline across all levels of mixture complexity. In simpler cases ($k=2$ and $k=3$ mixtures), our method achieves perfect or near-perfect clustering. Even when scaled to the maximum tested complexity ($k=7$ mixtures), our method maintains strong performance, whereas the baseline method's performance degrades substantially.

\subsection{False negative reduction.}
\label{app:false_negative}

To evaluate the effect of clustering on the rate of false negatives, we define the True Negative to False Negative Ratio (TFR).

\begin{figure}[h]
    \centering
    \includegraphics[width=0.65\textwidth]{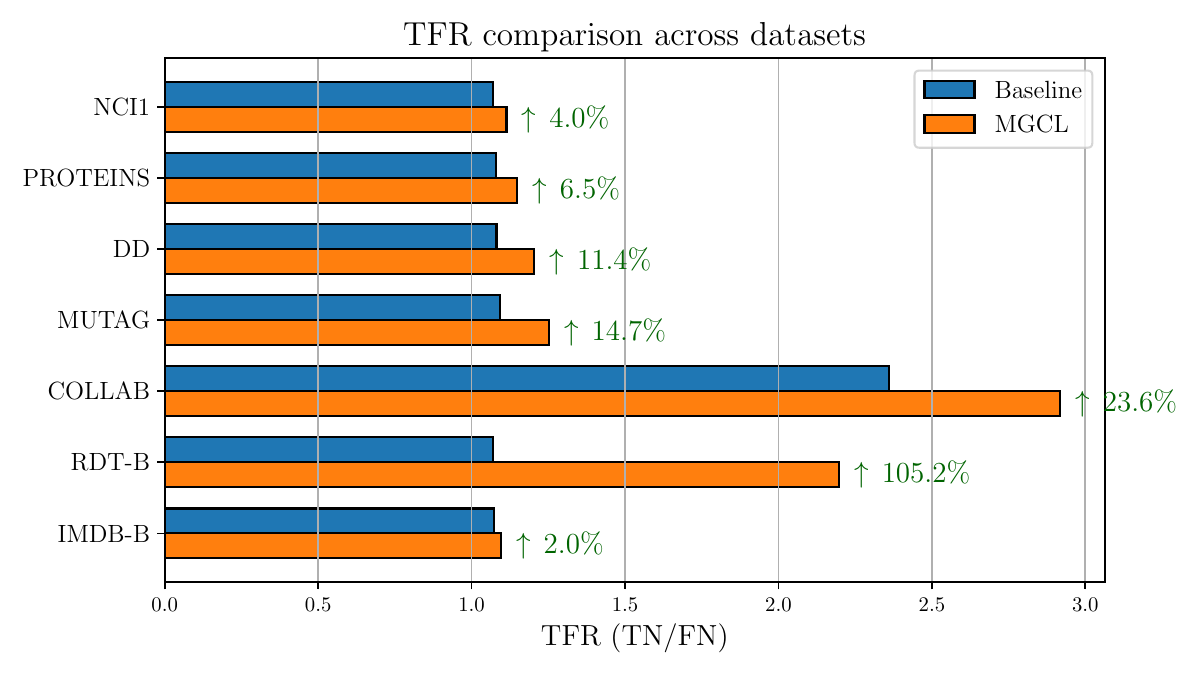}
    \caption{\small{Effect of clustering on TFR across different datasets.}}
    \label{fig:tfr}
\end{figure}

To compute this metric, in each data batch, we examine the negative samples relative to a graph \(i\).
Among these, the negative samples that share the same class as graph \(i\) are considered false negatives, while those with a different class are treated as true negatives.
Mathematically, TFR is defined as
$\text{TFR} = \frac{1}{B} \sum_{i=1}^B \frac{|\mathcal{TN}(i)|}{\max\{1,\,|\mathcal{FN}(i)|\}}.$
Note that in InfoNCE-based methods, all graphs in the batch except graph \(i\) itself are treated as negative samples.
However, in MGCL, the set of negative samples is smaller, as we exclude graphs from the same cluster as \(i\).
Although this reduced set naturally results in fewer false negatives, computing the relative ratio of true negatives to false negatives (TFR) ensures a fair comparison across methods.
We compute the TFR for each graph in the batch and then average it across all graphs in the dataset.
As shown in Figure~\ref{fig:tfr}, this metric increases across all datasets compared to the baseline (which represents all InfoNCE-based methods).
This also provides evidence that motif-based clustering helps uncover the true underlying structure of the data.

\subsection{Contributions of GIA and CAL in MGCL}
\label{app:mgcl-ablation}

MGCL combines two distinct mechanisms: graphon-informed augmentation (GIA), which generates positive samples from the anchor's inferred graphon, and the cluster-aware loss (CAL), which restricts negatives to graphs from different clusters. To isolate their individual contributions, we perform an ablation that adds these components one at a time on top of the standard GraphCL baseline:

\begin{itemize}
    \item \textbf{GraphCL}: random augmentation with the standard InfoNCE loss.
    \item \textbf{MGCL w/o CAL}: GraphCL augmented with GIA only (graphon-informed positives, standard InfoNCE negatives).
    \item \textbf{MGCL}: GraphCL with both GIA and CAL.
\end{itemize}

Table~\ref{tab:mgcl-ablation} reports the results on three datasets. Adding GIA alone consistently improves over GraphCL, and adding CAL on top of GIA yields a further consistent gain on all three datasets. This decomposition shows that the two mechanisms contribute complementary improvements, and it aligns with the theoretical decomposition of the lower bound in Theorem~\ref{thm:mgcl} (Theorem 3.2) as described in Appendix~\ref{app:proof_thm2}: GIA strengthens the alignment term $\mathrm{sim}(z_t, \tilde{z}_t)$ by producing more semantically faithful positive pairs, while CAL affects the remaining two terms by restricting negatives to different clusters and pushing the negative centroid farther from the anchor.

\begin{table}[h]
\centering
\caption{Component ablation for MGCL. Starting from GraphCL, adding graphon-informed augmentation (GIA) and then the cluster-aware loss (CAL) each yields consistent gains, isolating their individual contributions.}
\label{tab:mgcl-ablation}
\small
\setlength{\tabcolsep}{4pt}
\begin{tabular}{lccc}
\toprule
Method & PROTEINS & MUTAG & IMDB-B \\
\midrule
GraphCL (random aug. + InfoNCE) & $74.39 \pm 0.4$ & $86.80 \pm 1.3$ & $71.14 \pm 0.4$ \\
MGCL w/o CAL (GraphCL + GIA) & $74.79 \pm 0.6$ & $88.29 \pm 1.7$ & $71.38 \pm 0.8$ \\
MGCL (GraphCL + GIA + CAL) & $75.54 \pm 0.3$ & $90.03 \pm 1.3$ & $72.28 \pm 0.5$ \\
\bottomrule
\end{tabular}
\end{table}

\subsection{Underlying models}
\label{app:underlying_models}

In Figure~\ref{fig:graphons}, we present the estimated graphons for the COLLAB and IMDB-BINARY datasets, each corresponding to a cluster identified by our framework in an unsupervised manner. 
For instance, the estimated graphons display diverse structural patterns for COLLAB: Cluster 8 exhibits a block-like structure similar to a two-community stochastic block model (SBM) with an imbalanced size ratio, Clusters 2 and 6 display nearly uniform connectivity, suggesting dense or complete graph structures, and Cluster 10 reveals a heavy-tailed pattern indicative of power-law behavior. 
Although some similarities exist among certain models, the overall variability emphasizes the presence of multiple distinct generative mechanisms within the dataset.
This heterogeneity demonstrates the limitations of using a single fixed or random augmentation strategy, as commonly adopted in existing GCL and Mixup methods.

\begin{figure*}[h]
	\centering
	\includegraphics[width=\textwidth]{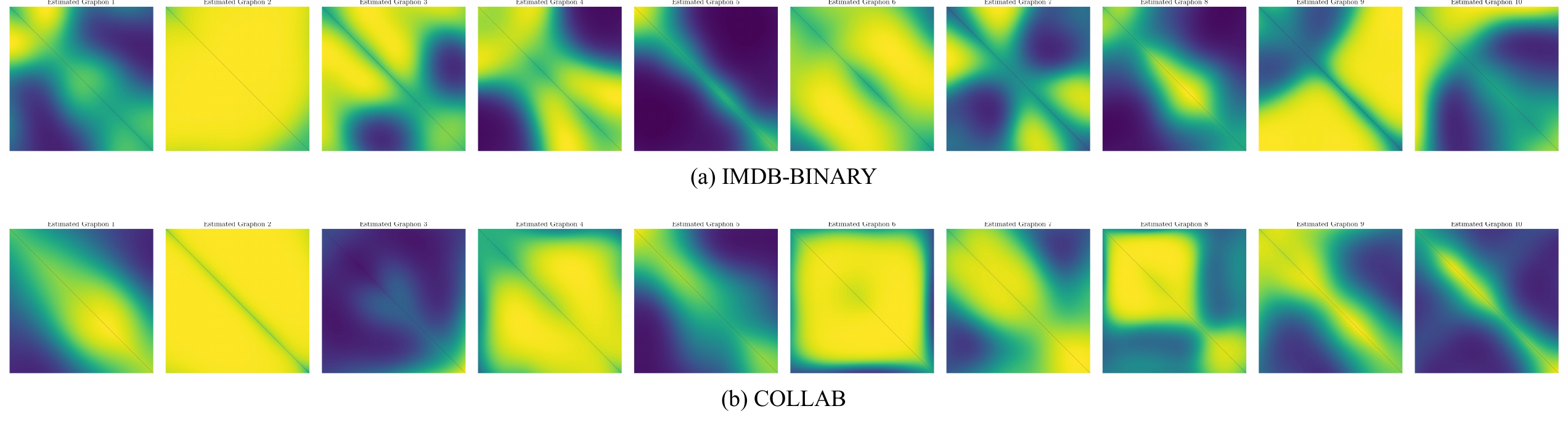}
	\caption{\small{Cluster-specific estimated graphons in the COLLAB and IMDB-BINARY dataset, revealing diverse structures.}}
	\label{fig:graphons}
\end{figure*} 

Furthermore, when estimating models within each class for mixup applications, we observe diverse graphons both within and across classes, as illustrated in Figure~\ref{fig:graphons_class}.

\begin{figure*}[h]
	\centering
	\includegraphics[width=\textwidth]{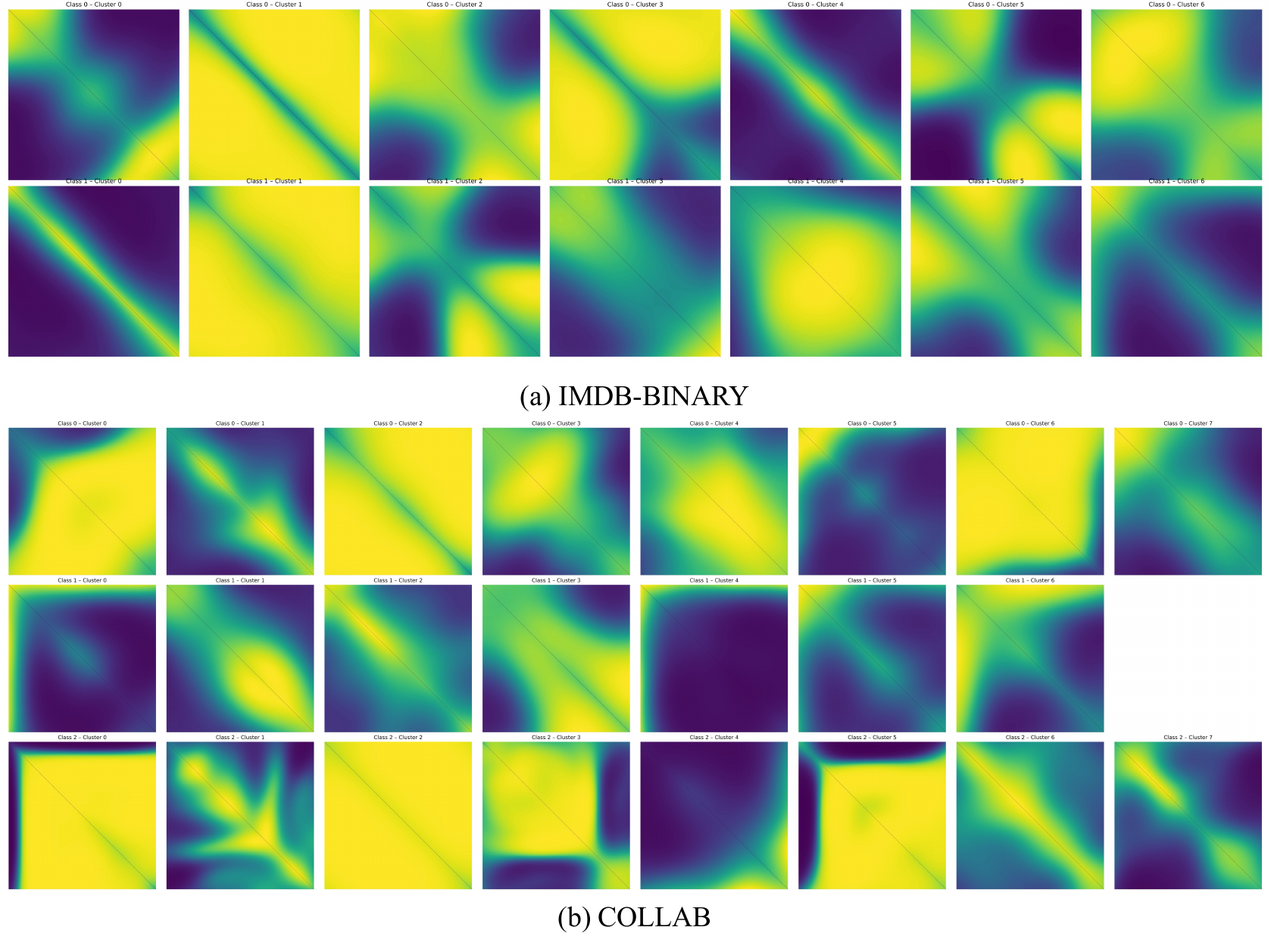}
	\caption{\small{Cluster-specific estimated graphons in the COLLAB and IMDB-BINARY dataset within each class, revealing diverse structures.}}
	\label{fig:graphons_class}
\end{figure*}

\section{Scope and Limitations.}
\label{app:scope_limit}
\subsection{Small / high-variance graphs.}
As made explicit by Theorem 3.1, the finite-sample error in empirical motif densities is larger for smaller graphs, so the small-graph regime is inherently harder for any moment-based method. Our pipeline mitigates this by clustering in moment space and estimating each component from groups of similar graphs, which yields a cluster-level averaging effect rather than relying on any individual graph. Robustness, therefore, depends more strongly on the number of available graphs and the stability of the estimated motif statistics than on the size of any single graph. We characterized this behavior empirically in Appendix~\ref{app:small-graph}.

\subsection{Dense-graphon regime.} Our framework is developed within the classical dense-graphon setting, in which graphons arise as limit objects of dense graph sequences. Accordingly, the theoretical results of Section 3.1, in particular Theorem 3.1, and the graphon-mixture interpretation should be understood within this regime, and graphon mixtures should not be interpreted as a universally valid generative model for all graph data without qualification. In particular, the present theory is not intended to fully cover highly sparse, large-scale networks.

\subsection{Relaxation to sparse models.} The dense-graphon assumption can in principle be relaxed. For sparse random graph models of the form $\rho_n W$, one can work with suitably normalized motif densities, and the decomposition underlying Theorem 3.1 continues to hold with sparsity-dependent concentration terms. This is consistent with the literature on sparse graph limits and sparse exchangeable graphs~\cite{borgs2019,borgs2018p,borgs2018sparse}, where $L^p$ graphons and graphex-type models provide natural extensions beyond the dense setting. We regard the dense formulation adopted here as a clean and theoretically tractable first setting; a sparse latent model with a corresponding sparse graphon estimator would provide a closer modeling match for sparse datasets and is a natural direction for future work.

\subsection{Sparse benchmarks.} Several of the benchmark datasets used in our experiments (e.g., NCI1 and MUTAG) contain relatively sparse graphs. In these cases, the estimated dense graphon is best viewed as a cluster-level approximation that matches the relevant motif (moment) structure of each subpopulation, rather than as an exact data-generating model. Empirically, our method remains effective on these datasets, which indicates that the estimated dense graphons still capture useful cluster-level structure even when the dense-graph assumption is not strictly satisfied. We emphasize that the central mechanism of our framework, motif-based identification of heterogeneous graph populations and the use of that structure in downstream learning, does not depend on the dense-graph asymptotics holding exactly for every finite dataset.

\end{document}